%% file: main.tex
\documentclass{article}

\usepackage[preprint]{neurips_2026}
\usepackage[utf8]{inputenc}
\usepackage[T1]{fontenc}
\usepackage{hyperref}
\hypersetup{
  hidelinks,
  pdftitle={SCARCE: Scalable Cascade Analysis for Rare-event Characterisation via Embeddings},
  pdfauthor={Yingjie Wang, Yi Dong, Edmund Lau, Jie Meng, Taylor T Johnson, Xiaowei Huang}
}
\usepackage{url}
\usepackage{booktabs}
\usepackage{amsfonts}
\usepackage{amsmath}
\usepackage{amssymb}
\usepackage{nicefrac}
\usepackage{microtype}
\usepackage{xcolor}
\usepackage{graphicx}
\usepackage[caption=false]{subfig}
\usepackage{algorithm}
\usepackage{algpseudocode}
\usepackage{enumitem}
\usepackage{amsthm}
\usepackage{float}  
\usepackage{comment}

\newtheorem{theorem}{Theorem}
\newtheorem{corollary}[theorem]{Corollary}

\newtheorem{assumption}{Assumption}

\newcommand{\mae}{\mathrm{MAE}}

\setlist{topsep=2pt,itemsep=1pt,parsep=1pt,partopsep=0pt}

\makeatletter
\def\thm@space@setup{\thm@preskip=4pt \thm@postskip=4pt}
\makeatother

\newcommand{\projname}{SCARCE}

\title{\projname: Scalable Cascade Analysis for Rare-event Characterisation via Embeddings}

\author{
  Yingjie Wang\textsuperscript{1},
  Yi Dong\textsuperscript{1},
  Edmund Lau\textsuperscript{2},
  Jie Meng\textsuperscript{3},
  Taylor T Johnson\textsuperscript{4},
  Xiaowei Huang\textsuperscript{1,*}\\[0.5em]
  \textsuperscript{1}University of Liverpool, UK\\
  \textsuperscript{2}UK AI Security Institute, UK\\
  \textsuperscript{3}Loughborough University, UK\\
  \textsuperscript{4}Department of Computer Science, Vanderbilt University, USA\\[0.3em]
  \textsuperscript{*}Corresponding author:
  \texttt{Xiaowei.Huang@liverpool.ac.uk}
}
\begin{document}

\maketitle

\begin{abstract}
Rare events govern the safety profile of modern AI systems, yet their probabilities are extremely difficult to estimate: direct Monte Carlo demands prohibitive sample budgets. Subset Simulation (SS) addresses this by decomposing a rare-event target probability into a product of moderate conditional probabilities defined by a sequence of nested intermediate events. However, classical SS requires a handcrafted scalar performance function whose sublevel sets specify those events, demanding detailed knowledge of the failure geometry and limiting transfer to new domains. We propose \projname{} (Scalable Cascade Analysis for Rare-event Characterisation via Embeddings), which replaces the performance function with learned latent representations and a family of geometric rulers that score proximity to failure regions. Adaptive thresholding on these scores constructs the nested intermediate events directly from data, eliminating domain-specific heuristics. We formalise \projname{} via a non-negative supermartingale construction, yielding a high-probability upper envelope on the estimate that remains valid even under early stopping, showing that data-driven event design preserves statistical validity. On MNIST misclassification, where dense Monte Carlo provides ground truth, \projname{} achieves ${\sim}400\text{--}500\times$ lower mean absolute error than grid-searched traditional SS while eliminating the latter's systematic over-counting. We then study LLM PAIR-style jailbreaks as a rare-event estimation problem with a fleet-level threat model and tunable adversarial fraction $\eta$. On Llama-Guard-3-8B hidden states, a PCA-based ruler attains $2.6\%$ mean relative error against finite-sample reference probabilities for $\eta \geq 10^{-3}$ -- within the references' own $27.9\%$ bootstrap relative half-width -- and transfers to a GCG-style corpus with $2.93\%$ relative error after refitting calibration. A directional KL criterion $\mathrm{KL}(p_{\text{good}} \,\|\, p_{\text{bad}})$ ranks rulers consistently with the resulting estimation error (Spearman $\rho = 0.83$).
\end{abstract}

\input{Sections/Section1_Introduction}

\input{Sections/Section2_RelatedWork}

\input{Sections/Section3_SSDD}
\input{Sections/Section4_Theory_V2}

\input{Sections/Section5_Vision}

\input{Sections/Section6_LLM_JB_V3}

\input{Sections/Section7_Conclusion}

\input{Sections/Acknowledgements}


\bibliography{biblio}
\bibliographystyle{plain}

\appendix
\input{Sections/Appendix_SS}
\input{Sections/Appendix_Rulers}
\input{Sections/Appendix_Martingale}
\input{Sections/Appendix_Runtime}
\input{Sections/Appendix_LLM_JB}
\input{Sections/Appendix_Ruler_Selection}
\input{Sections/Appendix_Plots}


\end{document}

%% file: Sections/Section1_Introduction.tex
\section{Introduction}
\label{sec:intro}

Rare events govern the safety profile of modern AI systems: misclassification in vision models, policy violations in large language models (LLMs), and silent safety-filter bypasses. They all share extremely small probability under typical operating conditions, yet carry catastrophic consequences.
Quantifying such probabilities is a prerequisite for safety certification, yet direct Simple Monte Carlo (SMC) is computationally intractable.
For a target probability $P_f \approx 10^{-5}$, an SMC estimator with 10\% relative error demands on the order of $10^{7}$ independent forward passes. Such a budget is already prohibitive for a single image classifier and entirely infeasible for multi-turn LLM inference.

Subset Simulation (SS)~\cite{au2001estimation,au2003subset} tackles this problem by decomposing an extreme probability into a product of moderate conditional probabilities evaluated over nested intermediate events.
This factorisation reduces the required sample count from $100/P_f$ to $\sim N\times|\log_{10} P_f|$~\cite{zuev2015subset}, where $N$ is the per-level sample size and $\log_{10}$ reflects the standard SS choice of a $0.1$ conditional probability per level. For example, to estimate $P_f \approx 10^{-5}$, SS requires roughly $10^{5}$ in total with $N = 20000$, while SMC needs $10^{7}$ samples. 
It has been applied to structural reliability for over two decades~\cite{song2009subset,zuev2015subset}.
However, classical SS relies on a handcrafted \emph{performance function} $g(x)$ whose level sets define the nested subsets.
Designing such a function presupposes detailed knowledge of the failure geometry. It requires domain expertise, does not transfer across 
problem areas, and becomes infeasible when the failure manifold is 
high-dimensional, disconnected, or non-convex in the latent space. This expert-driven design of intermediate events is the bottleneck of 
classical SS: the method scales with the target probability but not with 
domain complexity. 

The limitations of classical SS are sharply exposed in LLM jailbreak estimation.
At deployment scale, an LLM serves a \emph{user fleet}: a large population of queries dominated by benign use, with a small adversarial subpopulation. Jailbreaks are extremely rare in this regime, yet a single occurrence can erode user trust \cite{martell2024mitigative}.
Existing paradigms measure empirical attack success rates under a fixed set of adversarial templates~\cite{jailbreakbench2024,chao2023pair} or 
forecast deployment risk via volume scaling~\cite{jones2025forecasting}, neither of which estimates fleet-level violation probability under an 
explicit threat model.
What is missing is a principled rare-event framework that can answer: \emph{given a fleet in which a fraction $\eta$ of users are adversarial, what is the per-turn probability of a safety-policy violation?}
No natural scalar performance function exists here. The ``distance to jailbreak'' lives in the high-dimensional latent space of a safety-judge model, whose geometry varies with the attack strategy and interaction turn.

We propose \projname\ (Scalable Cascade Analysis for Rare-event Characterisation via Embeddings), which eliminates  handcrafted performance functions by learning rare-event geometry from data.
\projname\ operates in three stages: (i) an 
encoder produces geometrically separable embeddings; (ii) a family of \emph{geometric rulers} scores each latent point by its proximity to the failure region, replacing the scalar performance function of classical SS; and (iii) adaptive thresholding via the $(1-\rho)$-quantile constructs the nested intermediate events of SS.
The estimator inherits the sample efficiency of SS while scaling to domains where failure geometry cannot be specified by human experts.

This adaptivity comes at a statistical cost: each threshold depends on the same samples used to estimate level probabilities, so the running estimate is path-dependent. Classical SS analysis~\cite{au2001estimation} controls the variance of the final estimator; we additionally require uniform control across all levels and stopping rules. \S\ref{sec:theory} provides this using the supermartingale machinery developed in anytime-valid inference~\cite{howard2021time,ramdas2020admissible} and adaptive multilevel splitting~\cite{brehier2016unbiased}: Ville's inequality bounds the upward excursion of the running estimate uniformly across the cascade. The practitioner can stop the cascade at a data-dependent level and still report a valid one-sided bound.




We validate \projname\ in two complementary domains. On MNIST misclassification under perturbation, where dense SMC provides ground truth, \projname\ reduces mean absolute error by ${\sim}400\text{--}500\times$ over grid-searched traditional SS and eliminates its systematic over-counting bias. We then transfer the framework to LLM jailbreak estimation, where we formulate a fleet-level threat model with a tunable adversarial fraction $\eta$ and operate on Llama-Guard-3-8B hidden states. The MNIST-best ruler degrades in this latent space, but a PCA-based ruler achieves $2.6\%$ mean relative error against finite-sample reference probabilities across five turns and $\eta \geq 10^{-3}$. Over the same grid, bootstrap intervals for the reference probabilities have $27.9\%$ average relative half-width. The same ruler also transfers to a GCG-style jailbreak corpus; after refitting calibration, it achieves $2.93\%$ relative error. Because the dominant error mode at small $\eta$ is asymmetric (benign queries leaking past the threshold), we propose a directional KL criterion $\mathrm{KL}(p_{\text{good}} \,\|\, p_{\text{bad}})$ as a ruler selector: it ranks rulers consistently with their \projname\ error (Spearman $\rho = 0.83$, $n = 6$ families), while five symmetric metrics show no monotonic agreement ($\rho \leq 0.09$). The main contributions are:

\begin{enumerate}
\item \textbf{SCARCE: a representation-aware Subset Simulation framework.} \projname\ replaces the handcrafted performance function of Subset Simulation with a learned encoder, a catalogue of geometric rulers, and adaptive quantile thresholds, yielding a rare-event estimator that scales to domains where the modeller cannot specify which scalar quantity to threshold.

\item \textbf{Level-wise anytime-valid upper envelope.} We formalise \projname\ via a non-negative supermartingale construction using machinery from anytime-valid inference~\cite{howard2021time,ramdas2020admissible} and adaptive multilevel splitting~\cite{brehier2016unbiased}, extended to the data-driven, representation-aware setting. Ville's inequality bounds the upward excursion of the running estimate uniformly across all cascade levels, including data-dependent stopping levels.

\item \textbf{Vision.} On MNIST misclassification, \projname\ achieves ${\sim}400\text{--}500\times$ lower MAE than grid-searched traditional SS, while eliminating the latter's systematic over-counting bias.

\item \textbf{LLM jailbreak estimation.} We formulate fleet-level jailbreak probability as a rare-event problem under a tunable adversarial fraction $\eta$, and validate \projname\ on Llama-Guard-3-8B hidden states across both PAIR- and GCG-style attack styles (\S\ref{sec:exp-llm}).

\item \textbf{Ruler-selection criterion.} A directional KL divergence $\mathrm{KL}(p_{\text{good}} \,\|\, p_{\text{bad}})$ ranks rulers consistently with their \projname\ error on the benchmarked grid (Spearman $\rho = 0.83$, $n = 6$ families), while five symmetric distributional metrics show no monotonic agreement ($\rho \leq 0.09$). We treat this as preliminary evidence; the small ruler-family grid is a clear caveat.
\end{enumerate}

%% file: Sections/Section2_RelatedWork.tex
\section{Related Work}
\label{sec:related}

\noindent\textbf{Subset Simulation and the scalability wall.}
Subset Simulation (SS)~\cite{au2001estimation,au2003subset} is a long-established method for rare-event probability estimation in structural reliability~\cite{song2009subset,zuev2015subset}. The telescoping decomposition into nested intermediate events also underpins Adaptive Multilevel Splitting (AMS)~\cite{cerou2007ams,brehier2016unbiased,cerou2019historical}; \projname's level-wise supermartingale construction (\S\ref{sec:theory}) draws on this AMS line together with the supermartingale machinery of anytime-valid inference~\cite{howard2021time,ramdas2020admissible}. Subsequent work improves SS along several axes: adaptive MCMC proposals~\cite{papaioannou2015mcmc}, Gaussian-process surrogates~\cite{zhang2019surrogate}, Hamiltonian-neural-network emulators~\cite{chen2024hnnss}, and adaptive intermediate-probability selection~\cite{papaioannou2022adaptive}. All of these accelerate or approximate the \emph{evaluation} of a given performance function. Even deep-learning surrogates of the limit-state surface~\cite{li2021deep} require the analyst to specify \emph{which} scalar quantity to threshold. The scalability question remains open: \emph{how to automatically construct meaningful intermediate events without domain-specific heuristics?}

\noindent\textbf{Alternative rare-event methods.}
Beyond SS, the cross-entropy method~\cite{rubinstein1999cross,botev2017minimax} and importance sampling~\cite{bucklew2004introduction} offer alternative variance-reduction strategies, but both require an informative proposal distribution whose design is itself domain-specific. Normalizing-flow samplers learn proposals from data: FlowRES~\cite{falkner2024flowres} operates on continuous physical state spaces and abandons the nested-subset structure that gives SS its logarithmic sample scaling; NOFIS~\cite{gao2024nofis} assumes predefined nested events with a target proposal that circularly depends on the failure indicator being estimated. Committor-function learning~\cite{li2024committor} trains networks for molecular-dynamics transition probabilities, not failure-probability estimation. None of these methods redesign the SS intermediate-event mechanism using learned latent representations.

\noindent\textbf{Rare events in LLM safety: jailbreak evaluation.}
The safety evaluation of large language models has progressed along two tracks. Fixed-template benchmarks (JailbreakBench~\cite{jailbreakbench2024}, HarmBench~\cite{mazeika2024harmbench}, JailbreakRadar~\cite{shen2025jailbreakradar}) report point-estimate attack success rates under predetermined prompt sets, with limited confidence intervals or mechanisms for adapting to new models. Adaptive attack generators (PAIR~\cite{chao2023pair}, GCG~\cite{zou2023universal}, TAP~\cite{mehrotra2024tap}, AutoRedTeamer~\cite{zhu2024autoredteamer}, Boundary-Point Jailbreaking~\cite{davies2026pjb}) optimise for attack generation rather than probability estimation; they answer ``can this model be jailbroken?'' but not ``how likely is a jailbreak under a realistic user fleet?'' The probabilistic thread is thin. Wu and Hilton~\cite{wu2024rareoutputs} estimate single-token rare-output probabilities via importance sampling under a fixed prompt distribution and fixed model, requiring a re-tuned proposal per behaviour. Jones et~al.~\cite{jones2025forecasting} fit extreme-value scaling laws to per-query elicitation probabilities, extrapolating across query \emph{volume}; \projname\ is complementary, estimating fleet-level probability under an explicit threat model parameterised by $\eta$ at fixed volume. A recent multimodal study~\cite{li2025mllmjp} requires many repeated queries per input and does not transfer to text-only models. No prior method estimates fleet-level jailbreak probability with adaptive, data-driven intermediate events.

%% file: Sections/Section3_SSDD.tex
\section{Data-Driven Subset Simulation}
\label{sec:method}

Subset Simulation (SS)~\cite{au2001estimation} estimates an extreme failure probability $P_f = \mathbb{P}(g(x)\!\geq\!\gamma)$ by decomposing $\{g\!\geq\!\gamma\}$ into nested intermediate events $\mathcal{F}_1 \supset \cdots \supset \mathcal{F}_L = \mathcal{F}$ and chaining their conditional probabilities as $P_f = \mathbb{P}(\mathcal{F}_1)\prod_{l=2}^{L}\mathbb{P}(\mathcal{F}_l \mid \mathcal{F}_{l-1})$. The $(1-\rho)$-quantile rule adaptively sets each level threshold so that the level-wise conditional probability is kept near a moderate value, typically $\rho=0.1$. The detailed mechanism of SS is in Appendix~\ref{app:SS}. The bottleneck of classical SS is the \emph{performance function} $g$: its level sets define the intermediate events, and designing it requires detailed knowledge of the failure geometry. \projname\ removes this bottleneck by replacing $g$ with a learned, data-driven scoring mechanism, specified by the three components: a latent representation, a geometric ruler with adaptive thresholds, and an optional surrogate-event design. 

\noindent\textbf{Latent-Space Representation.}
\label{sec:method-repr}
The first stage of \projname\ requires an encoder $f_\theta\!:\mathcal{X}\!\to\!\mathbb{R}^d$ to produce latent embeddings $z = f_\theta(x)$. We do not constrain how $f_\theta$ is obtained; we only require that it produces a latent space in which failure and normal samples occupy distinguishable regions:
\[
\begin{aligned}
\mathcal D_{\mathrm{bad}}&:=\{z_i=f_\theta(x_i): x_i \text{ labelled as failure}\},\\
\mathcal D_{\mathrm{good}}&:=\{z_i=f_\theta(x_i): x_i \text{ labelled as normal}\}.
\end{aligned}
\]

\begin{itemize}[leftmargin=1.2em,topsep=1pt,itemsep=1pt,parsep=0pt]
\item \textbf{(C1)~Geometric separability.} $\mathcal D_{\mathrm{good}}$ and $\mathcal D_{\mathrm{bad}}$ occupy distinguishable regions of $\mathbb{R}^d$, so failure is concentrated in latent space rather than diffuse.
\end{itemize}

(C1) is mild; two routes satisfy it. The first is to train an encoder for separation: contrastive objectives produce embeddings on which classes become linearly separable, in both self-supervised~\citep{chen2020simclr} and supervised~\citep{khosla2020supcon} settings; one-class methods such as Deep~SVDD~\citep{ruff2018deepsvdd} train an encoder so that one class concentrates in a compact latent region with the other outside. The second route is to read off separability from an existing model: hidden representations of a standard classifier are already separable enough that Mahalanobis distance detects out-of-distribution and adversarial inputs~\citep{lee2018mahalanobis}.

This paper exercises both routes. On MNIST (\S\ref{sec:exp-cv}) we train a contrastive encoder; on LLM jailbreak (\S\ref{sec:exp-llm}) we use Llama-Guard-3-8B hidden states directly, consistent with the linear decodability of safety-relevant features in LLM activations~\citep{zou2023repe,arditi2024refusal}. The two settings differ in how easily a single ruler succeeds (a point we return to in \S\ref{sec:exp-llm}), but in both, (C1) is observed empirically rather than imposed by assumption.

\noindent\textbf{Geometric Rulers.}
\label{sec:method-rulers}
Given an encoder satisfying (C1), \projname\ replaces the handcrafted performance function $g(x)$ with a \emph{geometric ruler} $G\!:\mathbb{R}^d\!\to\!\mathbb{R}$ that scores each embedding by its proximity to the failure region. We orient each ruler so that larger values indicate greater proximity to failure. The failure event becomes
\begin{equation}
\label{eq:failure event}
\mathcal{F} = \{x \in \mathcal{X} \mid G(f_\theta(x)) \geq \gamma_F\}
\end{equation}
where $\gamma_F$ is the \emph{failure threshold}, and the intermediate events $\mathcal{F}_l = \{x \mid G(f_\theta(x)) \geq \gamma_l\}$ inherit the nested structure of classical SS. A valid ruler must satisfy two requirements that turn (C1) into something the cascade can act on:
\begin{itemize}[leftmargin=1.2em,topsep=1pt,itemsep=1pt,parsep=0pt]
\item \textbf{(R1)~Monotone concentration on failure.} The conditional population mass of $\{G \ge \gamma_l\}$ on the failure region $\mathcal{F}$ increases monotonically with $\gamma_l$, so a sample at level $l$ is more likely to be a failure than a sample at level $l-1$.
\item \textbf{(R2)~Offline construction.} $G$ is fully specified by statistics computed from $\mathcal D_{\mathrm{good}}$ and $\mathcal D_{\mathrm{bad}}$ (finite labelled samples from the normal and failure populations, mapped to latent space) at the available label budget; no test-time labels are required.
\end{itemize}


A full catalogue of proposed rulers with mathematical definitions is given in Appendix~\ref{app:rulers}.

\noindent\textbf{Ruler selection.}
\label{sec:method-thresholds}
Multiple ruler families and their variants may satisfy $(R1)$ and $(R2)$. We do not derive a single optimal ruler from first principles, since doing so would require knowing the failure-region density, which is precisely what is unavailable in unspecified domains. Instead, \projname\ commits to a \emph{catalogue + selection} design: a library of geometric rulers spanning centroid-based, nearest-neighbour, angular, distribution-aware, boundary-based, and projection-based families (full list in Appendix~\ref{app:rulers}), paired with an empirical selection rule:~\emph{(i) With ground truth.} When SMC can deliver a reliable estimate at the target rare regime (as on MNIST), candidate rulers are pre-screened on a held-out seed set and ranked by accuracy and reliability against SMC~(\S\ref{sec:exp-cv}).~\emph{(ii) Without ground truth.} When SMC is too expensive at that regime (as on LLM jailbreak at $\eta = 10^{-3}$), selection relies on intrinsic properties of the ruler-score distributions. We use the directional KL divergence: $S(G) \;=\; \mathrm{KL}\!\bigl(p_{\text{good}}\,\|\,p_{\text{bad}}\bigr)$, where $p_{\text{bad}}$ and $p_{\text{good}}$ are the empirical distributions of the ruler scores $\{G(z) : z \in \mathcal{D}_{\text{bad}}\}$ and $\{G(z) : z \in \mathcal{D}_{\text{good}}\}$. A larger $S$ means less benign leakage towards the failure region; for small failure probabilities, this one-sided leak dominates \projname's error. We validate this criterion as preliminary guidance in \S\ref{sec:exp-llm}.
With the ruler $G$ fixed, \projname\ constructs intermediate events automatically with adaptive thresholds: at each level $l$, $\gamma_l$ is set to let roughly $\rho$ percent of samples survive into level $l+1$. Algorithm~\ref{alg:\projname} summarises the core procedure.

\begin{algorithm}[!hptb]
\small
\caption{Data-Driven Subset Simulation (\projname).}
\label{alg:\projname}
\begin{algorithmic}[1]
\Require Encoder $f_\theta$; Labelled data $\mathcal{D}_{\text{good}}, \mathcal{D}_{\text{bad}}$; Ruler $G$; Quantile parameter $\rho$; Sample size per level $N$; Failure threshold $\gamma_F$
\State \textit{\% Offline:} obtain $f_\theta$; compute ruler statistics $(\mu_b, \mu_g, \Sigma_b, \ldots)$ from labelled embeddings
\State \textit{\% Online:} draw $N$ samples from $p(x)$; compute scores $\{s_i = G(f_\theta(x_i))\}_{i=1}^N$
\For{level $l = 1, 2, \dots$}
  \State $\gamma_l \gets (1-\rho)$-quantile of $\{s_i\}$
  \If{$\gamma_l \ge \gamma_F$ (target threshold reached)} \textbf{break} \EndIf
  \State Generate next-level samples via MCMC restricted to $\{s \geq \gamma_l\}$; recompute scores $\{s_i\}$
\EndFor
\State Terminal level: $I_i^{(F)} \gets \mathbb{I}\!\left\{ s_i \ge \gamma_F \right\}, i=1,\dots,N$
\State \Return $\hat{P}_f = \left(\prod_{j=1}^{L-1}\hat p_j\right) \frac{1}{N}\sum_{i=1}^N I_i^{(F)}$
\end{algorithmic}
\end{algorithm}

\noindent\textbf{Surrogate Event Design for Unspecified Representations.}
\label{sec:surrogate-threshold}
In domains like LLM jailbreak, the latent space is high-dimensional and forcing an encoder to satisfy (C1) is a research topic in itself. We instead decouple what the ruler scores well from what we want the probability of, and correct the offset offline, using labelled calibration data from the target distribution.

Let $T$ denote the labelled true failure event (e.g.\ a judge-defined jailbreak), and define the surrogate $A_\tau := \{G(f_\theta(x)) \geq \tau\}$, a region the ruler is by construction good at scoring, with $\tau$ set as a percentile of the failure-side score distribution. The cascade estimates $P(A_\tau)$; the gap to $P(T)$ is captured by precision and recall measured once on labelled data for a given target distribution:
\begin{equation}
\label{eq:cal-theory}
\mathrm{precision}(\tau)=P(T\mid A_\tau), \qquad \mathrm{recall}(\tau)=P(A_\tau\mid T), \qquad C_\tau \;=\; \frac{\mathrm{precision}(\tau)}{\mathrm{recall}(\tau)}.
\end{equation}
Bayes' rule then gives $P(T)=P(A_\tau)\,\cdot\,C_\tau$, so the calibrated estimator is $\hat{P}_f^{\,\text{cal}} \;=\; \hat{P}(A_\tau)\,\cdot\,C_\tau.$
$\tau$ trades purity against recall: higher $\tau$ gives a purer but smaller surrogate, lower $\tau$ admits more benign leakage; empirical analysis is in \S\ref{sec:llm-results}. This design is optional: vision experiments (\S\ref{sec:exp-cv}) use the direct estimator; LLM experiments (\S\ref{sec:exp-llm}) use the calibrated form. Details in Appendix~\ref{app:recall}.

%% file: Sections/Section4_Theory_V2.tex
\section{Theoretical Guarantees}
\label{sec:theory}

\projname\ chooses thresholds $\gamma_l$ adaptively from observed quantiles and supports early stopping. Two natural concerns follow: \emph{can such an adaptive cascade still bound the upward error of the running estimate $\hat{P}_l$ relative to the path-wise probability $P_l$, uniformly across all levels and stopping times, and does the bound survive the calibrated surrogate extension?} We answer both by constructing a non-negative supermartingale; Ville's inequality then yields an anytime-valid one-sided envelope.

Let $\{\mathcal{G}_l\}_{l=0}^{L}$ be a \emph{filtration}: a growing sequence of information sets $\mathcal{G}_0\subseteq\mathcal{G}_1\subseteq\cdots$ in which $\mathcal{G}_l$ records the encoder outputs, ruler scores, and adaptive thresholds observed up to level $l$. 
At the start of level $l$, \projname\ has already observed the level $l-1$ scores and sets:
\[
\gamma_l = \operatorname{Quantile}_{1-\rho}\{G(f_\theta(x_j^{(l-1)})):j=1,\dots,N\},
\]
as in Algorithm~\ref{alg:\projname}. Thus $\gamma_l$ is $\mathcal{G}_{l-1}$-measurable and the event $\mathcal{F}_l$ is predictable. Define the level-wise conditional probability:
\begin{equation}
    \label{eq:sec4-level-prob}
    p_l := \mathbb{P}(\mathcal{F}_l \mid \mathcal{G}_{l-1}), \qquad
    P_l := \prod_{i=1}^{l}p_i, \qquad
    \hat{P}_l := \prod_{i=1}^{l} \hat{p}_i
\end{equation}
where $\hat{p}_l$ is the level-$l$ empirical conditional probability from Algorithm~\ref{alg:\projname}: $\hat{p}_l = \rho$ at intermediate levels and $\hat{p}_L = K_L/N$ at level $L$, so $\hat{P}_L = \hat{P}_f$.

Under adaptive thresholds $\gamma_l$, $p_l$ is generally random but $\mathcal{G}_{l-1}$-measurable. Therefore, $P_l$ is a path-dependent random quantity that is $\mathcal{G}_l$-measurable.
A \emph{supermartingale} is a non-negative process whose conditional expectation does not increase along a filtration; \emph{Ville's inequality} converts this into anytime-valid overshoot control (Appendix~\ref{app:martingale}).

\begin{assumption}[Conservative level estimator]
\label{ass:conservative}
For each level $l$, $p_l>0$ and $\mathbb{E}[\hat p_l \mid \mathcal G_{l-1}] \le p_l$.
\end{assumption}

\noindent The unbiased version $\mathbb{E}[\hat p_l \mid \mathcal G_{l-1}] = p_l$ holds when the MCMC kernel at level $l$ is in stationarity with respect to $\mathbb{P}(\cdot \mid \mathcal{F}_{l-1})$; under exact stationarity $M_l$ defined below is then a non-negative martingale. The weaker $\le$ form admitted by Assumption~\ref{ass:conservative} is a checkable condition that accommodates the $O(1/N)$ finite-sample bias of adaptive subset simulation~\cite{au2001estimation}; it is empirically satisfied in our experiments. Ville's inequality applies in either case. See Appendix~\ref{app:martingale} for verification.

\begin{theorem}[Supermartingale property]
\label{thm:supermartingale} 
Under Assumption~\ref{ass:conservative}, define
\begin{equation}
    \label{eq:sec4 supermartingale}
    M_0 = 1, \qquad
    M_l := \frac{\hat{P}_l}{P_l}
        = \prod_{i=1}^{l} \frac{\hat{p}_i}{p_i}, \qquad
        l = 1, \dots, L.
\end{equation}
Then $M_l$ is a non-negative supermartingale w.r.t.\ $\{\mathcal{G}_l\}_{l=0}^{L}$: $\mathbb{E}[M_l \mid \mathcal{G}_{l-1}] \le M_{l-1}$.
\end{theorem}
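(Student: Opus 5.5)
The plan is to verify the three defining properties of a non-negative supermartingale in turn: adaptedness, non-negativity with integrability, and the one-step conditional inequality. The product structure $M_l = M_{l-1}\cdot \hat p_l/p_l$ reduces everything to a single-level statement, which Assumption~\ref{ass:conservative} provides almost directly.

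\emph{Adaptedness and non-negativity.} Each $\hat p_i$ is computed from the level-$i$ samples and the predictable threshold $\gamma_i$, so it is $\mathcal{G}_i$-measurable. Each $p_i=\mathbb{P}(\mathcal{F}_i\mid\mathcal{G}_{i-1})$ is $\mathcal{G}_{i-1}\subseteq\mathcal{G}_i$-measurable. Since $p_i>0$ by Assumption~\ref{ass:conservative}, the ratio $\hat p_i/p_i$ is a well-defined $\mathcal{G}_i$-measurable random variable. Hence $M_l=\prod_{i\le l}\hat p_i/p_i$ is $\mathcal{G}_l$-measurable. Non-negativity is immediate because $\hat p_i\in\{\rho\}\cup\{K_L/N\}\subset[0,1]$ and $p_i>0$.

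\emph{One-step inequality.} Write $M_l = M_{l-1}\,\hat p_l/p_l$. Both $M_{l-1}$ and $1/p_l$ are non-negative and $\mathcal{G}_{l-1}$-measurable, so I would pull them out of the conditional expectation, using the version of ``taking out what is known'' for non-negative variables, which needs no integrability:
\[
\mathbb{E}[M_l\mid\mathcal{G}_{l-1}] = \frac{M_{l-1}}{p_l}\,\mathbb{E}[\hat p_l\mid\mathcal{G}_{l-1}] \le \frac{M_{l-1}}{p_l}\,p_l = M_{l-1}.
\]
The inequality is Assumption~\ref{ass:conservative}. Starting from $M_0=1$ and iterating with the tower property gives $\mathbb{E}[M_l]\le 1$, which yields integrability by induction.

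\emph{Main obstacle.} The only delicate point is measure-theoretic rather than computational. Because $p_l$ is random and can be arbitrarily small, $1/p_l$ need not be bounded, so one cannot assume $M_l$ is integrable before conditioning. I would handle this by working throughout with conditional expectations of non-negative extended-real variables, where pulling out measurable factors is always valid, and then deducing $\mathbb{E}[M_l]\le 1<\infty$ a posteriori. A secondary point to check is that $\gamma_l$ is indeed $\mathcal{G}_{l-1}$-measurable, so that $p_l$ is predictable; this was established just before Assumption~\ref{ass:conservative} and I would simply cite it.
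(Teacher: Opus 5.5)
Your proof is correct and follows the paper's own argument. Adaptedness comes from the $\mathcal{G}_{l-1}$-measurability of $\gamma_l$ and $p_l$, non-negativity from $\hat p_l\ge 0$ and $p_l>0$, and the one-step inequality from pulling the $\mathcal{G}_{l-1}$-measurable factors out of the conditional expectation and invoking Assumption~\ref{ass:conservative}. Your additional derivation of $\mathbb{E}[M_l]\le 1$ via conditional expectations of non-negative variables supplies the integrability condition, which the paper's proof does not check even though its own supermartingale definition requires it.
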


\noindent The supermartingale property does not assert that $\hat{P}_l$ is unbiased; rather, the running ratio $\hat{P}_l / P_l$ has non-increasing conditional expectation, which controls upward overshoot via Ville's inequality.

\begin{corollary}[Anytime validity]
\label{cor:anytime}
For any $\delta\in(0,1)$, Ville's inequality for non-negative supermartingales implies~\cite{howard2021time,ramdas2020admissible} (Appendix~\ref{app:martingale}):
\begin{equation}
    \label{eq:sec4 ville's}
    \mathbb{P}\!\left(\sup_{1\le l\le L}\frac{\hat{P}_l}{P_l}\ge \frac{1}{\delta}\right)\le \delta.
\end{equation}
It bounds the \emph{largest} value of $M_l$ across the cascade rather than at a fixed level: the overshoot probability is at most $\delta$ no matter when we stop. In particular, for any $\{\mathcal{G}_l\}$-stopping time $\ell^*\!\le\!L$, the same envelope applies at $\ell^*$ without correction.

\end{corollary}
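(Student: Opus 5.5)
The statement immediately above is Corollary~\ref{cor:anytime}. I will derive it directly from Theorem~\ref{thm:supermartingale} by applying Ville's inequality to the process $M_l=\hat P_l/P_l$. The proof has three steps: check the hypotheses of Ville's inequality, apply it over a finite horizon, and handle stopping times.

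\textbf{Step 1: hypotheses.} By Theorem~\ref{thm:supermartingale}, $(M_l)_{l=0}^{L}$ is a non-negative supermartingale with respect to $\{\mathcal G_l\}$ and starts at $M_0=1$. Non-negativity holds because $\hat p_i\ge 0$ and $p_i>0$ by Assumption~\ref{ass:conservative}. Adaptedness holds because $\hat p_i$ is $\mathcal G_i$-measurable and $p_i$ is $\mathcal G_{i-1}$-measurable. Integrability follows by iterating the supermartingale inequality, which gives $\mathbb E[M_l]\le M_0=1$.

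\textbf{Step 2: maximal inequality over the finite horizon.} Since the horizon is finite, I will prove Ville's inequality directly by optional stopping rather than citing the infinite-horizon version. Fix $c=1/\delta$ and define the stopping time
\[
\tau=\min\{l\le L: M_l\ge c\}\wedge L.
\]
This $\tau$ is a bounded $\{\mathcal G_l\}$-stopping time. The stopped process $M_{l\wedge\tau}$ is again a supermartingale; this is the standard one-line check using $\{\tau\ge l\}\in\mathcal G_{l-1}$. Hence $\mathbb E[M_\tau]\le M_0=1$.

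On the event $\{\sup_{1\le l\le L} M_l\ge c\}$ we have $M_\tau\ge c$, and $M_\tau\ge 0$ everywhere. Markov's inequality then gives
\[
c\,\mathbb P\Bigl(\sup_{l} M_l\ge c\Bigr)\le \mathbb E[M_\tau]\le 1.
\]
Substituting $c=1/\delta$ yields \eqref{eq:sec4 ville's}.

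\textbf{Step 3: arbitrary stopping times.} Let $\ell^*\le L$ be any $\{\mathcal G_l\}$-stopping time. Pointwise, $M_{\ell^*}\le \sup_{1\le l\le L} M_l$, so
\[
\{M_{\ell^*}\ge 1/\delta\}\subseteq\Bigl\{\sup_{1\le l\le L} M_l\ge 1/\delta\Bigr\}.
\]
The same bound $\delta$ therefore applies at $\ell^*$ without any correction. This monotonicity argument also covers data-dependent stopping rules that are not stopping times, because it only compares $M_{\ell^*}$ with the supremum. One edge case needs attention: if the cascade terminates at a random level $L$, I take $L$ to be a bounded stopping time and extend $M_l$ as constant after $L$. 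The extended process is still a supermartingale, so Steps 2 and 3 go through unchanged.

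\textbf{Main obstacle.} The only delicate point is Step 1: the denominators $p_l$ must be predictable and strictly positive, so that the ratio $M_l$ is well defined and adapted. Assumption~\ref{ass:conservative} supplies exactly this. Once it is in place, the rest is the routine optional-stopping argument of Step 2.
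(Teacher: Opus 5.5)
Your proposal is correct and follows essentially the paper's route: the appendix likewise proves Ville's inequality by optional stopping at the first time $M_l$ crosses $1/\delta$, truncated to a bounded horizon and using $\mathbb{E}[M_0]\le 1$, and then applies it to $M_l=\hat P_l/P_l$ from Theorem~\ref{thm:supermartingale}. Your finite-horizon version avoids the paper's $N\to\infty$ monotone-convergence step, and your pointwise bound $M_{\ell^*}\le\sup_{l}M_l$ is a clean way to get the stopping-time clause, valid indeed for any random $\ell^*\le L$.
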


\begin{corollary}[Uniform Upper Bound]
\label{cor:upper-bound}
Equivalently in Corollary~\ref{cor:anytime}, with probability at least $1-\delta$:
\begin{equation}
    \label{eq:sec4 upper bound}
    \hat{P}_l \le \frac{P_l}{\delta}, \qquad \forall l=1, \dots , L.
\end{equation}
With confidence $1-\delta$, the \projname\ estimate is at most a factor of $1/\delta$ above the path-wise probability $P_l$ at every level. A single budget $\delta$ covers the entire run.
\end{corollary}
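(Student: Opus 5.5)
The statement of Corollary~\ref{cor:upper-bound} is the complement of the event controlled in Corollary~\ref{cor:anytime}, so we derive it from that corollary with no new probabilistic input. Our plan is to identify the failure event of the uniform bound with the overshoot event of the running ratio $M_l=\hat P_l/P_l$ from Theorem~\ref{thm:supermartingale}. We then pass to complements.

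\textbf{Step 1: make the ratio well defined.} By Assumption~\ref{ass:conservative}, $p_i>0$ for every level, so $P_l=\prod_{i\le l}p_i>0$ almost surely. Hence $M_l$ is a well-defined non-negative random variable. Multiplying or dividing an inequality by $P_l$ therefore preserves its direction, and this is the only fact about $P_l$ the argument needs.

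\textbf{Step 2: match the events.} For each fixed $l$, we have $\hat P_l>P_l/\delta$ if and only if $M_l>1/\delta$. Consequently
\[
\Bigl\{\exists\, l\le L:\ \hat P_l> P_l/\delta\Bigr\}
=\Bigl\{\max_{1\le l\le L} M_l>1/\delta\Bigr\}
\subseteq\Bigl\{\sup_{1\le l\le L} M_l\ge 1/\delta\Bigr\}.
\]
Since $L$ is finite, the supremum is a maximum, so no limiting argument is required.

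\textbf{Step 3: apply Ville and take complements.} Corollary~\ref{cor:anytime} bounds the probability of the rightmost event by $\delta$. Taking complements gives that, with probability at least $1-\delta$, $\hat P_l\le P_l/\delta$ holds simultaneously for all $l=1,\dots,L$, which is \eqref{eq:sec4 upper bound}.

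The single budget $\delta$ suffices because the bound is on one event, the supremum, and not on $L$ separate events. So no union bound or Bonferroni correction over levels appears. Applying the bound at a stopping time $\ell^*$ is immediate, since $\ell^*$ is one of the indices $1,\dots,L$.

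\textbf{Main obstacle.} The only delicate point is the boundary case at equality. Ville's inequality as stated uses $\ge 1/\delta$, while the complement we need concerns $>$. The inclusion in Step~2 handles this, since $\{>\}\subseteq\{\ge\}$, so the direction is favourable.

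A secondary point is that $P_l$ is random and path-dependent. The statement is a pathwise inequality between two $\mathcal G_l$-measurable quantities, not a bound relative to a fixed target. This requires no extra work beyond Step~1, but it should be flagged when interpreting the result.
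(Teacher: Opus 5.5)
Your proposal is correct and is essentially the paper's argument: the paper obtains Corollary~\ref{cor:upper-bound} directly as the complement of the Ville event $\{\sup_{l} \hat P_l/P_l \ge 1/\delta\}$ from Corollary~\ref{cor:anytime}, i.e.\ Ville's inequality applied to $M_l=\hat P_l/P_l$. Your explicit handling of $P_l>0$ and of the $\ge$ versus $>$ boundary fills in details the paper leaves implicit.
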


\noindent\textbf{Surrogate Event:} In the surrogate setup the cascade targets $A_\tau$, so $\mathcal{F}_L = A_\tau$ and $\hat{P}_L = \hat{P}(A_\tau)$. Since $C_\tau$ is fixed offline and $\mathcal{G}_0$-measurable, the calibrated estimator $\hat{P}_f^{\,\text{cal}} = \hat{P}(A_\tau) \cdot C_\tau$ inherits the same anytime-valid overshoot control relative to the calibrated target $P(T) = P(A_\tau)\cdot C_\tau$. Details in Appendix~\ref{app:martingale}.

\noindent\textbf{Guarantee Summary:}
Corollary~\ref{cor:upper-bound} applies to both $\hat{P}_l$ and the calibrated $\hat{P}_f^{\,\text{cal}}$, giving a one-sided multiplicative overshoot guarantee uniform across levels and stopping times.

%% file: Sections/Section5_Vision.tex
\section{Application I: Vision Robustness on MNIST}
\label{sec:exp-cv}

We instantiate \projname\ on MNIST misclassification under input perturbation, a setting where dense Monte Carlo provides ground truth and the latent space is well-structured.

\subsection{Experiment Setup}
\label{sec5:cv setup}

\noindent\textbf{Ground truth and baseline.}
Simple Monte Carlo (SMC) with $10^{6}$ perturbations per seed generates the dataset and provides ground-truth $p_f$. We compare \projname\ against Traditional Subset Simulation (Trad-SS) using the SS hyperparameters from \textsc{SAFARI}~\cite{safari}, an SS-based rare-event estimator for XAI robustness. SAFARI uses the canonical Au--Beck conditional-probability target $p_0 = 0.1$~\cite{au2001estimation} and grid-searches the MCMC step count; we leave both unchanged. AMS~\cite{cerou2007ams} shares the same telescoping estimator family as SAFARI's adaptive-quantile SS and is therefore represented by the Trad-SS column. The cross-entropy method~\cite{rubinstein1999cross} and flow-based samplers~\cite{falkner2024flowres,gao2024nofis} target a different proposal-design problem (\S\ref{sec:related}) and are not direct comparators here.


\noindent\textbf{Runtime calibration.}
Because the three methods ran on different machines, raw wall-clock times are not directly comparable. We calibrate all runtimes to a common reference rate derived from the Trad-SS CPU node ($T_{\text{ref}} = 6.473 \times 10^{-4}$\,s/sample); details in Appendix~\ref{app:runtime-calibration}.


\noindent\textbf{Two-step protocol.}
A \emph{seed} is one MNIST image used as input to the SMC perturbation procedure; we call it \emph{flipped} when its prediction changes under perturbation. Step~1 screens the ruler catalogue on 50 flipped seeds, ranking rulers against SMC by accuracy, query budget, and reliability. Step~2 evaluates the top 5 rulers on 100 held-out seeds, mixing flipped and non-flipped, against Trad-SS. The screening shows that rulers anchored to the failure-cluster statistics (\texttt{\_bad} variants) outperform their non-failure counterparts across all families (anchoring conventions in Appendix~\ref{app:rulers}); full screening figures are in Appendix~\ref{app:plots-part1}.


\subsection{Simulation Results}
\label{sec:exp-cv-step2}

\noindent\textbf{Accuracy.}
Every \projname\ ruler beats Trad-SS on $\mae(p_f)$ by ${\sim}400\text{--}500\times$ (Figure~\ref{fig:exp2-accuracy}). \projname\ point estimates land in $[4.02, 4.03] \times 10^{-3}$; Trad-SS reports $\hat{p}_f \approx 9.6 \times 10^{-3}$, more than twice the true probability. The seed-wise box-plot (Figure~\ref{fig:exp2-accuracy}, right) confirms this is not Monte Carlo noise: \projname's per-seed errors are orders of magnitude below those of Trad-SS, whose distribution has a heavy upper tail of overestimation.

\begin{table}[!ht]
\centering
\footnotesize
\setlength{\tabcolsep}{4pt}
\caption{Comparison on $100$ held-out MNIST seeds. \projname\ rulers reduce $\mae(p_f)$ by ${\sim}400\text{--}500\times$ over Trad-SS against SMC ($\bar p_f^{\,\text{SMC}}\!\approx\!4.026\!\times\!10^{-3}$). FP/FN shows a systematic overshoot bias from Trad-SS; runtimes calibrated, details in Appendix~\ref{app:runtime-calibration}.}
\label{tab:exp2-headtohead}
\begin{tabular}{lrrrrrrr}
\toprule
Method & mean $\hat{p}_f$ & $\mae(p_f)$ & $\log_{10}\mae$ & FP & FN & queries & runtime\,(s) \\
\midrule
Centroid Rot $90^{\circ}$ & $4.025\!\times\!10^{-3}$ & $1.14\!\times\!10^{-5}$ & $-4.94$ & $33$ & $1$ & $667{,}040$ & $432$ \\
OCSVM Bad                 & $4.031\!\times\!10^{-3}$ & $1.21\!\times\!10^{-5}$ & $-4.92$ & $37$ & $0$ & $660{,}820$ & $428$ \\
OCSVM Contrast            & $4.027\!\times\!10^{-3}$ & $1.29\!\times\!10^{-5}$ & $-4.89$ & $35$ & $2$ & $670{,}940$ & $434$ \\
Centroid Rot $30^{\circ}$ & $4.016\!\times\!10^{-3}$ & $1.47\!\times\!10^{-5}$ & $-4.83$ & $24$ & $1$ & $685{,}000$ & $443$ \\
Mahalanobis Bad           & $4.029\!\times\!10^{-3}$ & $1.48\!\times\!10^{-5}$ & $-4.83$ & $20$ & $0$ & $714{,}960$ & $463$ \\
\midrule
Traditional SS~\cite{safari} & $9.636\!\times\!10^{-3}$ & $5.72\!\times\!10^{-3}$ & $-2.24$ & $70$ & $0$ & $447{,}160$ & $289$ \\
\bottomrule
\end{tabular}
\end{table}


\begin{figure}[!ht]
\centering
\includegraphics[width=0.95\linewidth]{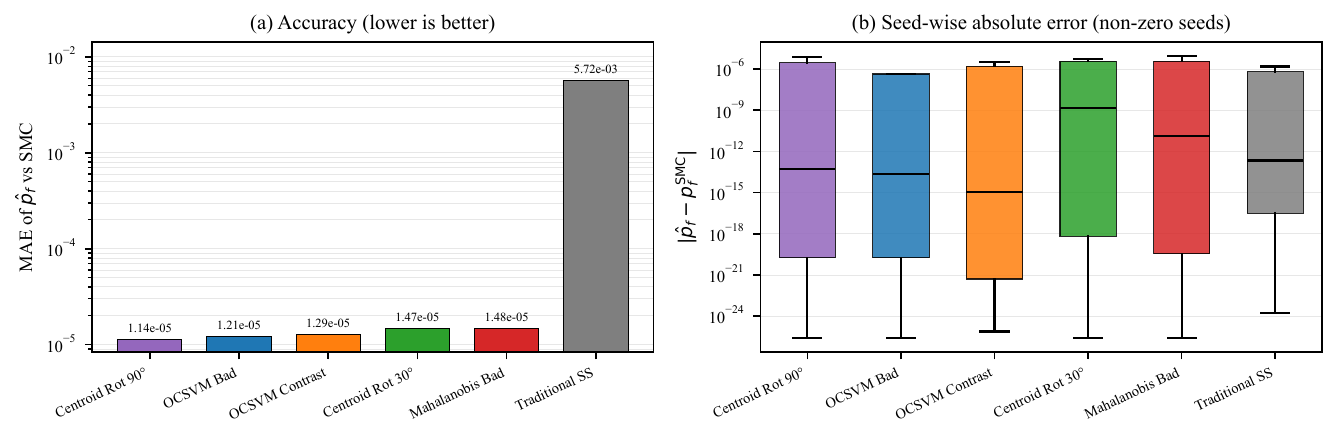}
\caption{Accuracy on 100 held-out seeds and seed-wise absolute error (log scale). \projname\ errors are orders of magnitude tighter.}
\label{fig:exp2-accuracy}
\end{figure}


\noindent\textbf{Efficiency.}
Trad-SS appears more efficient ($\sim 4.5 \times 10^{5}$ versus $\sim 6.6\text{--}7.1 \times 10^{5}$ queries; Table~\ref{tab:exp2-headtohead}), but the FP/FN counts show this is an artefact of an overly loose intermediate threshold: Trad-SS declares failures too eagerly, collapses the cascade prematurely, and inflates the estimator. The $70/100$ FP rate and $2.4\times$ overshoot are the same threshold-design failure viewed two ways. These query counts also measure final estimation runs only and exclude the prior SAFARI sensitivity sweep~\cite{safari}; including this offline cost, against \projname's amortised ruler screen, further favours \projname.

\noindent\textbf{Reliability.}
The mismatch counts (Table~\ref{tab:exp2-headtohead}) are the most diagnostic view: \projname\ rulers produce $20\text{--}37$ mismatches with $0\text{--}2$ false negatives; Trad-SS produces zero FN but $70$ FP. The asymmetry between high recall and catastrophic precision confirms that the SAFARI threshold is miscalibrated, while \projname's data-driven thresholds yield a balanced estimator.


%% file: Sections/Section6_LLM_JB_V3.tex

\section{Application II: LLM Jailbreak Estimation}
\label{sec:exp-llm}

We transfer the \projname\ framework from computer vision to LLM safety, where the failure modes shift from a well-structured latent space to a high-dimensional unspecified one.
A fleet-level threat model with a tunable adversarial fraction $\eta$ is introduced to define jailbreaks as rare events. 
This application uses two reformulations. First, as discussed in~\S\ref{sec:surrogate-threshold}, fully modelling the LLM latent failure geometry is beyond the present scope. A surrogate event $A_\tau$ is used in this section, with the bridge to the true jailbreak event $T$ supplied by the offline calibration $C_\tau$. In this way, the rare-event estimation difficulty in an unspecified latent space splits between cascade estimation and the calibration of $C_\tau$. Second, the fleet is modelled by a linear $\eta$-rescaling rather than a behavioural distribution. The public seed pools are narrow and small; e.g., JailbreakBench provides $\sim 200$ distinct behaviours. Even though we enrich our dataset to $\sim 2000$ behaviours with variants, it remains too limited to fit a faithful behavioural fleet at the diversity a real deployment would exhibit. Therefore, we treat $\eta$ as a tunable rescaling factor and leave behavioural fleet modelling to future work.

\subsection{Experiment Setup}
\label{sec:llm-setup}

\noindent\textbf{Rare Events.}
The attacks originate from PAIR adversaries~\citep{chao2023pair} and constitute a fraction $\eta$ of the fleet; the remainder are benign behaviours not designed to elicit unsafe content, expected to contribute negligible mass under the $p_{\text{unsafe}} > 0.80$ judge threshold. We use the linear approximation $P(\text{JB} \mid \text{fleet},\, t) \approx \eta \cdot P(\text{JB} \mid \text{PAIR},\, t)$, where $t$ is the PAIR attack turn; this is a scaling decomposition rather than a behavioural fleet model (limitations in \S\ref{sec:conclusion}). At $\eta = 10^{-3}$ with empirical PAIR rates of $5$--$25\%$, the fleet rate is $\mathcal{O}(10^{-4})$--$\mathcal{O}(10^{-5})$, requiring $10^{6}$--$10^{7}$ SMC samples for $10\%$ relative-error.

\noindent\textbf{Models \& Dataset.}
The target LLM is Llama-3.1-8B-Instruct; the attacker is Qwen2.5-14B-Instruct driven by PAIR; the judge is Llama-Guard-3-8B~\citep{inan2023llamaguard,llama3herd2024}, from whose first-token output we read $p_{\text{unsafe}}$ directly and declare a jailbreak when $p_{\text{unsafe}} > 0.80$.
The primary corpus \texttt{2k\_Mixed} expands the JailbreakBench behaviour seeds~\citep{jailbreakbench2024}\footnote{\url{https://github.com/JailbreakBench/jailbreakbench}} into $40{,}320$ records over $2{,}016$ behaviour units ($988$ harmful, $1{,}028$ benign), each with a full $20$-turn PAIR trajectory. We compute $p^{\mathrm{ref}}_f$ as the per-turn empirical jailbreak rate on \texttt{2k\_Mixed} times $\eta$; this is a finite-sample comparison target with its own MC standard error $\mathcal{O}(10^{-5})$ at $\eta = 10^{-3}$, not a ground truth.

\noindent\textbf{Encoder: judge hidden states.}
We extract embeddings from the judge model itself rather than
external encoders. Response text is formatted in the
Llama-Guard conversation template and passed through a frozen forward
pass. The mean-pooled last hidden layer yields a $4{,}096$-dimensional
embedding. An MLP projection head ($4096 \!\to\! 256 \!\to\! 64$,
L2-normalised) compresses embeddings for ruler fitting. Using the
judge's own hidden states keeps the ruler inside the judge's decision
space and removes any encoder-to-judge alignment gap. This choice is consistent with findings that safety-relevant features are linearly decodable from chat-model hidden states~\citep{zou2023repe,arditi2024refusal}, while Llama-Guard's task-specific fine-tuning further sharpens this separation~\citep{inan2023llamaguard}.

\subsection{Simulation Results}
\label{sec:llm-results}

\noindent\textbf{Comparators.} The two existing probabilistic LLM baselines target different quantities (\S\ref{sec:related}); we therefore report \projname\ against the inherited \texttt{mah\_bad} ruler (R8 in Appendix~\ref{app:rulers}; from~\S\ref{sec:exp-cv}) as an internal ablation and against the finite-sample reference $p^{\mathrm{ref}}_f$ defined above. Detailed analysis of why \texttt{mah\_bad} transfers poorly is in Appendix~\ref{app:llm-transfer}.

\noindent\textbf{Accuracy.} Figure~\ref{fig:accuracy-comparison} reports mean relative error $\mathrm{Mean}|\hat{p}_f / p^{\mathrm{ref}}_f - 1|$ over 20 individual cascade runs. The winning ruler is \textbf{PC1/$p75$}, achieving $2.6\%$ mean relative error averaged over 5 attack turns and across fleets $\eta \in [10^{-3}, 10^{-1}]$, against $56.6\%$ for \texttt{mah\_bad} ($>\!20\times$ improvement).
Over the same grid, bootstrap intervals for the finite-sample reference probabilities themselves have $27.9\%$ average relative half-width. Thus, the $2.6\%$ figure should be read as agreement within reference uncertainty, not recovery of a noise-free ground truth.
PC1 and the (label-using) Centroid direction perform nearly identically; PC1, however, needs no jailbreak labels at fit time.

\noindent\textbf{Attack-style transfer.} We test whether the selected PC1/$p75$ ruler transfers from PAIR-style jailbreaks to a GCG-style corpus. Plugging the same ruler into the new target shows that $G$ and its percentile threshold transfer, but the calibration $C_\tau$ must be refit on the new latent distribution. With GCG-refit calibration, PC1/$p75$ estimates the constructed $\eta=10^{-3}$ target with $2.93\%$ relative error, against ${\sim}42\%$ relative error using inherited PAIR calibration. Further sweeps (smaller $\eta$, Centroid ruler, and higher budget) appear in Appendix~\ref{app:llm-transfer}.

\begin{figure}[!ht]
\centering
\includegraphics[width=0.95\linewidth]{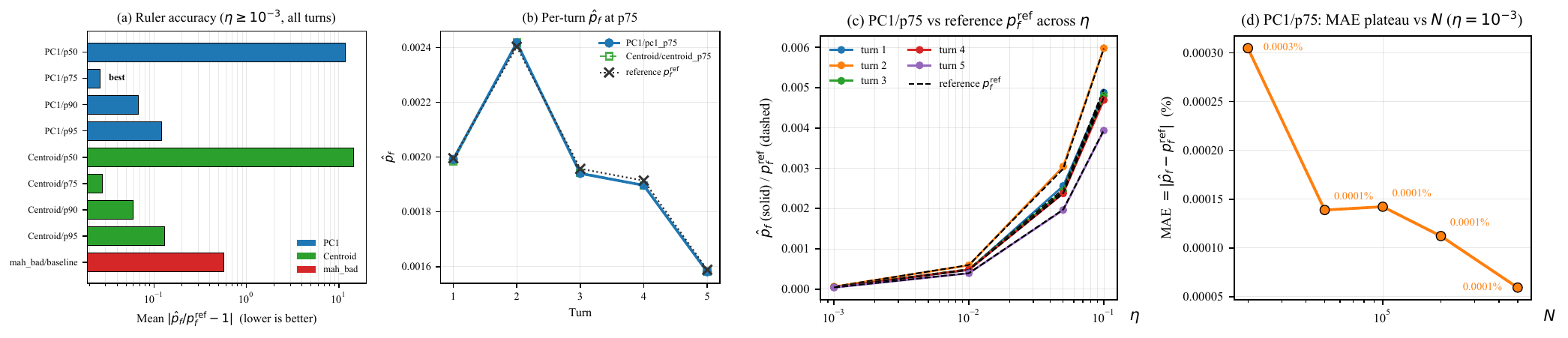}
\caption{Overall ruler accuracy: (a) average accuracy; (b) per-turn accuracy; (c) per-fleet accuracy; (d) MAE plateaus beyond $N{\approx}5{\times}10^4$.}
\label{fig:accuracy-comparison}
\end{figure}

\noindent\textbf{Surrogate Threshold Percentile.}
The surrogate threshold $\tau$ controls the operating point of $A_\tau$ (\S\ref{sec:surrogate-threshold}): lower percentiles retain more jailbreaks but admit more benign mass. Figure~\ref{fig:threshold-sweep-1e34} shows the trade-off in the PC1 family: $p_{75}$ is optimal at $\eta = 10^{-3}$, and at $\eta = 10^{-4}$ tighter percentiles ($p_{90}, p_{95}$) reduce benign leakage but $p_{75}$ still sits at the Pareto knee ($289\,$ms vs $522\,$ms for $p_{95}$ at marginal accuracy gain). We use PC1/$p_{75}$ as the main ruler and leave an $\eta$-adaptive percentile selector to future work.
\begin{figure}[!ht]
\centering
\includegraphics[width=0.95\linewidth]{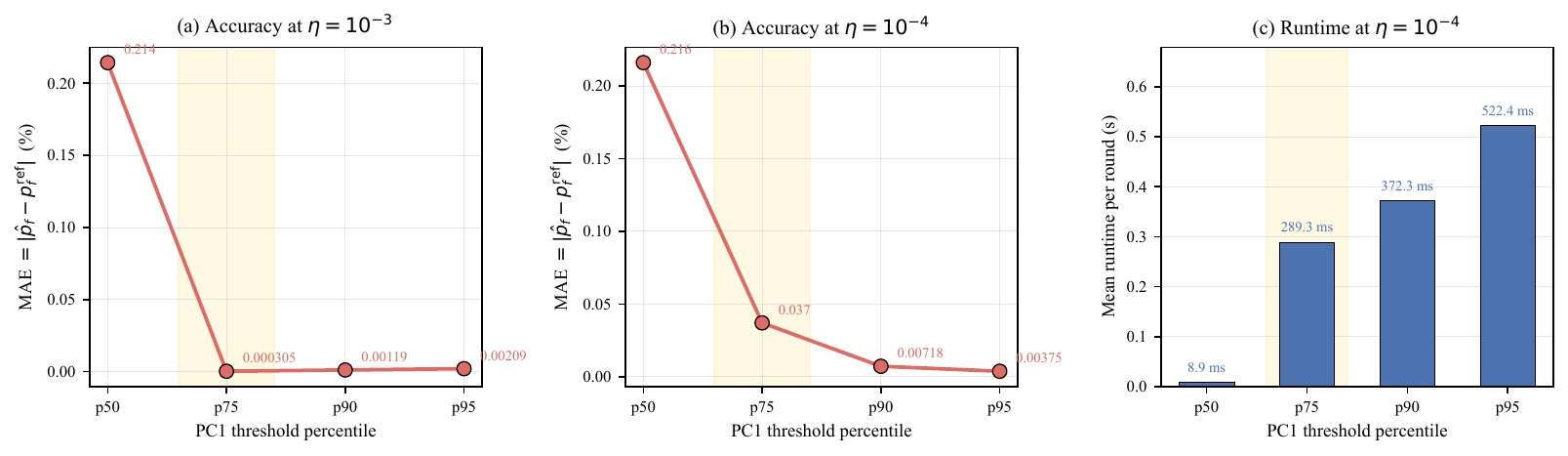}
\caption{Threshold percentile sweep in PC1 family. $p75$ is optimal at $\eta=10^{-3}$ and remains Pareto-efficient at $\eta=10^{-4}$, where tighter thresholds reduce benign leakage at higher cost.}
\label{fig:threshold-sweep-1e34}
\end{figure}

\noindent\textbf{Fleet parameter $\eta$ and population size $N$.}
$\eta$ encodes the fleet scenario rather than tuning the estimator. Figure~\ref{fig:accuracy-comparison}c shows $\hat{p}_f$ tracks $p_f^{\mathrm{ref}}$ as $\hat{p}_f$ climbs from $\mathcal{O}(10^{-4})$ at $\eta=10^{-3}$ to $\mathcal{O}(5\times 10^{-3})$ at $\eta=10^{-1}$, covering both the rare-event tail and the regime where SS would degenerate to plain Monte Carlo. Figure~\ref{fig:accuracy-comparison}d shows MAE drops from $N=20{,}000$ to $N=50{,}000$ then flattens: sample budget is important but not decisive. Additional plots are in Appendix~\ref{app:plots-part2}.

\noindent\textbf{Ruler Selection Guidance.}
With $\eta \ll 10^{-1}$ the fleet is nearly $100\%$ benign and estimation bias is dominated by one-sided benign leakage, so we recommend the directional KL divergence $\mathrm{KL}(p_{\text{good}} \,\|\, p_{\text{bad}})$ (\S\ref{sec:method}) as the ruler selector: across the six families benchmarked here it correlates with \projname\ error at Spearman $\rho = 0.83$ and correctly ranks PC1 first, while five symmetric metrics give $\rho \leq 0.09$. Rankings of $14$ candidate metrics, leave-one-family-out and leave-one-turn-out cross validation, and the bias decomposition motivating the asymmetric choice are in Appendix~\ref{app:llm-ranking}.

%% file: Sections/Section7_Conclusion.tex
\section{Conclusion, Limitations, and Extensions}
\label{sec:conclusion}

\projname\ reframes Subset Simulation as a representation-aware rare-event estimator: learned latent representations and geometric rulers replace handcrafted performance functions, with a non-negative supermartingale providing a level-wise anytime-valid upper envelope. \projname\ delivers ${\sim}400\text{--}500\times$ lower MAE than grid-searched traditional SS on MNIST, and tracks reference jailbreak probabilities within $2.6\%$ on PAIR (and $2.93\%$ on GCG, after refitting calibration) on Llama-Guard-3-8B hidden states.
\textit{Limitations:} accuracy is bounded by the embedding's normal/failure separation; the \S\ref{sec:exp-llm} fleet model is linear $\eta$-scaling, not behavioural; open-ended natural-language deployment is out of scope.
\textit{Extensions:} multi-model and multi-attack fleets, and richer ruler-selection criteria that better capture LLM latent geometry, are natural next steps.

%% file: Sections/Acknowledgements.tex

%% file: Sections/Appendix_SS.tex
\section{Subset Simulation}
\label{app:SS}

This appendix expands the brief overview of classical Subset Simulation (SS) given in \S\ref{sec:method}. SS estimates an extreme failure probability $P_f = \mathbb{P}(g(x) \geq \gamma_F)$ by decomposing the failure event into a chain of nested intermediate events.

\paragraph{Classical SS formulation.}
Let $g : \mathcal{X} \to \mathbb{R}$ be a scalar \emph{performance function} with the convention that $\{x : g(x) \geq \gamma_F\}$ is the failure event $\mathcal{F}$. Choose a sequence of intermediate thresholds $\gamma_1 < \gamma_2 < \cdots < \gamma_L = \gamma_F$ inducing nested events $\mathcal{F}_1 \supset \mathcal{F}_2 \supset \cdots \supset \mathcal{F}_L = \mathcal{F}$ with $\mathcal{F}_l = \{x : g(x) \geq \gamma_l\}$. The failure probability factorises as
\begin{equation}
\label{eq:ss-telescope}
P_f \;=\; \mathbb{P}(\mathcal{F}_1) \prod_{l=2}^{L} \mathbb{P}(\mathcal{F}_l \mid \mathcal{F}_{l-1}),
\end{equation}
the telescoping identity that gives SS its sample efficiency. If each conditional probability is targeted at a moderate value $\rho \in (0,1)$ (typically $\rho = 0.1$), then $L \approx \lceil \log_\rho P_f \rceil$ levels suffice to reach $P_f$, and the total sample budget grows as $\mathcal{O}(N \log P_f^{-1})$ rather than the $\mathcal{O}(P_f^{-1})$ cost of crude Monte Carlo~\cite{au2001estimation}.

\paragraph{Adaptive-threshold mechanism.}
Choosing the thresholds $\gamma_l$ a priori is impractical because the level sets of $g$ are unknown. Au and Beck~\cite{au2001estimation} introduced the \emph{adaptive quantile} rule: at each level, $\gamma_l$ is set to the empirical $(1-\rho)$-quantile of the current ruler scores, so that exactly a fraction $\rho$ of the $N$ samples survive into the next level. This makes $\hat{p}_l = \rho$ deterministic at every intermediate level, and the only random quantity in the cascade is the terminal-level acceptance fraction $\hat{p}_L = K_L/N$, where $K_L$ is the count of level-$L$ samples that satisfy $g(x) \geq \gamma_F$.

\paragraph{MCMC at each level.}
Conditional samples from $\mathbb{P}(\cdot \mid \mathcal{F}_{l-1})$ are obtained by running a Markov chain whose stationary distribution is $\mathbb{P}(\cdot \mid \mathcal{F}_{l-1})$. The seeds are the $\rho N$ surviving samples from the previous level. Detailed balance and stationarity are needed for the terminal-level estimator $\hat{p}_L$ to be unbiased; this is the assumption invoked by the supermartingale argument in \S\ref{sec:theory} and Appendix~\ref{app:martingale}.

\paragraph{Last-level validity.}
The telescoping identity \eqref{eq:ss-telescope} terminates at $P_f$ only when the final-level event coincides with the true failure event, i.e.\ $\mathcal{F}_L = \mathcal{F}$. This anchors the SS estimator to the target probability and is also the readout point for the supermartingale upper bound and stopping rules of \S\ref{sec:theory}. Both classical SS and \projname\ rely on this identification.

\paragraph{Where \projname\ differs.}
\projname\ inherits the telescoping identity, the adaptive-quantile rule, and the per-level MCMC machinery unchanged. The only departure is the source of the scoring function: classical SS requires a handcrafted $g$ encoding domain knowledge, whereas \projname\ replaces $g$ with a geometric ruler $G \circ f_\theta$ built from labelled embeddings. Everything below the scoring layer is identical, which is why the supermartingale guarantees of \S\ref{sec:theory} apply without modification.

%% file: Sections/Appendix_Rulers.tex
\section{Full Catalogue of Geometric Rulers}
\label{app:rulers}

The catalogue below matches the rulers instantiated in the paper and
the released code. Let $\mathcal{B} = \{z_i^{(b)}\}_{i=1}^{n_b}$
and $\mathcal{G} = \{z_i^{(g)}\}_{i=1}^{n_g}$ denote latent
representations of failure (bad) and normal (good) samples, with
centroids $\mu_b,\mu_g$ and covariance $\Sigma_b$. The CV study
screens thirteen instantiated rulers: one centroid direction, six
centroid rotations, three Mahalanobis anchoring variants, and three
OC-SVM anchoring variants. The LLM study evaluates six family-level
rulers, with percentile thresholds swept inside each learned family.
The last column identifies whether the family is used in the CV
experiment, the LLM experiment, or both.

\paragraph{Mathematical mechanisms.}
Each family measures a different geometric notion of failure proximity.
\emph{Centroid direction} measures signed progress from the normal
centroid toward the failure centroid:
$d_c=(\mu_b-\mu_g)/\|\mu_b-\mu_g\|$ and
$G(z)=(z-o)^\top d_c$, with $o$ chosen by the builder. Larger values
mean that $z$ lies farther along the normal-to-failure axis.

\emph{Centroid rotations} keep the same centroid axis but rotate it in
the first two PCA directions: if
$d_c=\alpha a+\beta b+d_\perp$, then
$d_\theta=(\alpha\cos\theta-\beta\sin\theta)a+
(\alpha\sin\theta+\beta\cos\theta)b+d_\perp$, and
$G(z)=(z-o)^\top d_\theta$. This probes nearby failure-facing axes
when the direct centroid line is not optimal.

\emph{Mahalanobis} measures covariance-normalised closeness to the bad
cluster,
$G(z)=-\sqrt{(z-\mu_b)^\top\Sigma_b^{-1}(z-\mu_b)}$; the inverse
covariance discounts high-variance failure directions and upweights
tight directions.

\emph{One-class SVM} measures signed distance to a learned boundary
around an anchor population, using
$G(z)=\sum_i \alpha_i K(z,z_i)-\rho$ for a bad-anchored model, with
signs flipped or contrasted for good and contrast variants.

\emph{PC1 projection} measures position along the largest-variance
direction $v_1$ of the relevant embedding population:
$G(z)=(z-\bar z)^\top v_1$, with the sign chosen so larger scores are
more failure-like.

\emph{Fisher LDA} measures position along the direction that maximises
between-class separation relative to within-class scatter,
$w_F=(S_w+\lambda I)^{-1}(\mu_{\mathrm{JB}}-\mu_{\mathrm{nonJB}})$ and
$G(z)=(z-\bar z)^\top w_F$.

\emph{Logistic regression} measures margin to a supervised linear
classifier, $G(z)=w^\top z+b$, where $(w,b)$ are fitted by
regularised cross-entropy on the labelled failure indicator. Threshold
variants do not change these score functions; they only choose
different operating percentiles of the same scalar score.

\begin{table}[htbp]
\centering
\scriptsize
\setlength{\tabcolsep}{3pt}
\caption{Ruler families used in the paper and released code.}
\label{tab:ruler-catalogue}
\resizebox{\linewidth}{!}{%
\begin{tabular}{p{0.17\linewidth}p{0.37\linewidth}p{0.32\linewidth}p{0.10\linewidth}}
\toprule
Family & Score and code variants & Paper role & Used in \\
\midrule
Centroid direction
& Linear projection along the normal-to-failure direction,
$G(z) = (z-o)^\top d_c$ with
$d_c=(\mu_b-\mu_g)/\|\mu_b-\mu_g\|$ and builder-specific offset $o$.
CV uses \texttt{z\_linear\_centroid}; LLM uses
\texttt{centroid\_p50}, \texttt{centroid\_p75},
\texttt{centroid\_p90}, and \texttt{centroid\_p95}.
& Part of the CV ruler screen and one of the two strongest LLM
projection families.
& Both \\

Centroid rotations
& Rotates the centroid direction inside the first two PCA directions:
\texttt{z\_linear\_centroid\_rot\_\{000,015,030,045,
060,090\}deg}
in CV. LLM ports the $90^\circ$ member as
\texttt{centroid\_rot90\_p50}, \texttt{p75}, \texttt{p90}, and
\texttt{p95}.
& Supplies the top CV ruler at $90^\circ$ and another top-five CV
ruler at $30^\circ$; the LLM $90^\circ$ variant tests transfer of the
CV winner.
& Both \\

Mahalanobis
& Distance to an anchored covariance model,
$G(z) = -\sqrt{(z-\mu_b)^\top\Sigma_b^{-1}(z-\mu_b)}$ for
\texttt{mah\_bad}. CV also includes \texttt{z\_mah\_good} and
\texttt{z\_mah\_contrast}; LLM uses the inherited \texttt{mah\_bad}
baseline and \texttt{mah\_bad\_p50}, \texttt{p75}, \texttt{p90},
\texttt{p95}.
& A top-five CV ruler, but a deliberately poor-transfer LLM baseline
used to diagnose distance-metric failure in judge hidden space.
& Both \\

One-class SVM
& Signed distance to a one-class SVM boundary, with
\texttt{z\_ocsvm\_bad}, \texttt{z\_ocsvm\_good}, and
\texttt{z\_ocsvm\_contrast}.
& CV-only screen; \texttt{ocsvm\_bad} and \texttt{ocsvm\_contrast}
reach the held-out top five.
& CV \\

PC1 projection
& First-principal-component scalar score,
$G(z) = (z-\bar z)^\top v_1$, oriented so larger scores are more
failure-like. LLM uses \texttt{pc1\_p50}, \texttt{pc1\_p75},
\texttt{pc1\_p90}, and \texttt{pc1\_p95}.
& Winning LLM ruler family; the main result uses PC1/$p75$.
& LLM \\

Fisher LDA
& Linear discriminant direction
$G(z)=(z-\bar z)^\top S_w^{-1}(\mu_{\mathrm{JB}}-\mu_{\mathrm{nonJB}})$,
swept as \texttt{fisher\_p50}, \texttt{p75}, \texttt{p90}, and
\texttt{p95}.
& LLM family-ranking comparator.
& LLM \\

Logistic regression
& L2-regularised logistic-regression score for
$\mathbf{1}\{p_{\mathrm{unsafe}}>0.80\}$, swept as
\texttt{logreg\_p50}, \texttt{p75}, \texttt{p90}, and \texttt{p95}.
& Supervised LLM family-ranking comparator.
& LLM \\
\bottomrule
\end{tabular}%
}
\end{table}

The earlier nearest-neighbour, cosine, Pearson-correlation, and
whitened-cosine descriptions came from an exploratory catalogue and
are not instantiated in the submitted experiments or in the released
code. For anchored families, suffixes have their usual meanings:
\texttt{\_bad} uses $\mathcal{B}$; \texttt{\_good} uses
$\mathcal{G}$ with sign reversed; and \texttt{\_contrast} subtracts
the normal-anchored score from the failure-anchored score. The
bad-anchored variants systematically win on MNIST
(\S\ref{sec:exp-cv}); see Appendix~\ref{app:plots-part1} for the full
screening figures.

%% file: Sections/Appendix_Martingale.tex
\section{Martingale Theory for \projname}
\label{app:martingale}

This appendix supports Section~\ref{sec:theory}. It records the general measure-theoretic definitions, a self-contained proof of Ville's inequality, the optional stopping theorem for non-negative supermartingales, and a refinement of the \projname\ filtration to MCMC-step granularity. The main text invokes the general results under the specific adaptedness established by the level-wise filtration $\{\mathcal{G}_l\}$ of \S\ref{sec:filtration}. Readers familiar with the classical theory may skip to \S\ref{app:martingale:stepwise} and \S\ref{app:martingale:\projname}.

\subsection{Filtrations, Martingales, and Supermartingales}
\label{app:martingale:defs}

Let $(\Omega,\mathcal{F},\mathbb{P})$ be a probability space. A \emph{filtration} is an increasing family of sub-$\sigma$-algebras $\{\mathcal{H}_n\}_{n\geq 0}$ with $\mathcal{H}_n \subset \mathcal{H}_{n+1} \subset \mathcal{F}$. A real-valued process $\{Y_n\}_{n \geq 0}$ is \emph{adapted} to $\{\mathcal{H}_n\}$ if $Y_n$ is $\mathcal{H}_n$-measurable for every $n$. An adapted process with $\mathbb{E}|Y_n| < \infty$ for all $n$ is a
\begin{itemize}[nosep]
\item \emph{martingale} if $\mathbb{E}[Y_{n+1} \mid \mathcal{H}_n] = Y_n$ almost surely,
\item \emph{supermartingale} if $\mathbb{E}[Y_{n+1} \mid \mathcal{H}_n] \leq Y_n$ almost surely,
\item \emph{submartingale} if $\mathbb{E}[Y_{n+1} \mid \mathcal{H}_n] \geq Y_n$ almost surely.
\end{itemize}
A \emph{stopping time} with respect to $\{\mathcal{H}_n\}$ is a random variable $\nu : \Omega \to \{0,1,\dots,\infty\}$ such that $\{\nu \leq n\} \in \mathcal{H}_n$ for every $n$. The stopped process $\{Y_{n \wedge \nu}\}$ is adapted to the same filtration and inherits the martingale class of $\{Y_n\}$.

\paragraph{Building block lemma.}
If $\{Y_n\}$ is a non-negative supermartingale with respect to $\{\mathcal{H}_n\}$ and $c > 0$ is a constant, then $\{c Y_n\}$ is a non-negative supermartingale with respect to the same filtration. More generally, if $C$ is $\mathcal{H}_0$-measurable and positive, $\{C Y_n\}$ is a non-negative supermartingale. This is used in the main text (Corollary~\ref{cor:anytime}) to absorb the calibration constant $C_\tau$.

\subsection{Ville's Inequality}
\label{app:martingale:ville}

\begin{theorem}[Ville~\cite{howard2021time,ramdas2020admissible}]
Let $\{Y_n\}_{n \geq 0}$ be a non-negative supermartingale with respect to a filtration $\{\mathcal{H}_n\}$ and $\mathbb{E}[Y_0] \leq 1$. For every $\alpha \in (0,1)$,
\[
  \mathbb{P}\!\left(\sup_{n \geq 0} Y_n \;\geq\; \tfrac{1}{\alpha}\right) \;\leq\; \alpha.
\]
\end{theorem}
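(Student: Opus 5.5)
The plan is the classical first-passage (stopping-time) argument, combined with Markov's inequality applied to a stopped supermartingale. Fix $\alpha\in(0,1)$ and define the stopping time
\[
\nu := \inf\{n\ge 0 : Y_n \ge 1/\alpha\},
\]
with $\inf\emptyset=\infty$. Since $\{\nu\le n\}=\bigcup_{k\le n}\{Y_k\ge 1/\alpha\}\in\mathcal{H}_n$, $\nu$ is indeed a stopping time. The event of interest is $\{\sup_{n\ge0} Y_n\ge 1/\alpha\}=\{\nu<\infty\}$, which is the increasing union over $N$ of the events $E_N:=\{\nu\le N\}=\{\max_{0\le n\le N}Y_n\ge 1/\alpha\}$. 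By continuity from below it therefore suffices to show $\mathbb{P}(E_N)\le\alpha$ for each fixed finite horizon $N$.

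For fixed $N$, consider the stopped process $Y_{n\wedge\nu}$. As noted in \S\ref{app:martingale:defs}, it is again a non-negative supermartingale. For completeness, this follows from the decomposition
\[
Y_{(n+1)\wedge\nu}-Y_{n\wedge\nu}=\mathbf{1}\{\nu>n\}(Y_{n+1}-Y_n),
\]
in which $\mathbf{1}\{\nu>n\}$ is $\mathcal{H}_n$-measurable. Iterating the supermartingale inequality via the tower property gives
\[
\mathbb{E}[Y_{N\wedge\nu}]\le\mathbb{E}[Y_0]\le 1.
\]
On $E_N$ we have $Y_{N\wedge\nu}=Y_\nu\ge 1/\alpha$, and off $E_N$ the term is still non-negative. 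Hence
\[
1\ge \mathbb{E}[Y_{N\wedge\nu}]\ge \tfrac{1}{\alpha}\,\mathbb{P}(E_N),
\]
that is, $\mathbb{P}(E_N)\le\alpha$. Letting $N\to\infty$ yields the claim.

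The only delicate step is passing from finite to infinite horizon. I deliberately avoid invoking optional stopping at the unbounded time $\nu$, which would require uniform integrability. Bounding each truncated event $E_N$ separately and then using monotone convergence of events sidesteps this entirely; non-negativity of $Y$ is essential here, since it lets us drop the contribution from $E_N^c$. Integrability of $Y_n$ is part of the supermartingale definition, so all the expectations above are finite. In the paper's application the horizon is $L<\infty$, so only the finite-$N$ bound is actually needed, applied to $Y_l=M_l$ from Theorem~\ref{thm:supermartingale} with $M_0=1$.
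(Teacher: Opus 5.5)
Your proof is correct and follows the paper's argument essentially verbatim: the first-passage time $\nu=\inf\{n\ge 0: Y_n\ge 1/\alpha\}$, the bound $\mathbb{E}[Y_{\nu\wedge N}]\le\mathbb{E}[Y_0]\le 1$ at the bounded time $\nu\wedge N$, restriction to $\{\nu\le N\}$ using non-negativity, and the monotone limit $N\to\infty$. The only difference is that you verify the bounded optional-stopping step yourself via the stopped-process increment $\mathbf{1}\{\nu>n\}(Y_{n+1}-Y_n)$, whereas the paper cites its optional stopping theorem for that step.
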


\begin{proof}
Define the stopping time $\nu = \inf\{n \geq 0 : Y_n \geq 1/\alpha\}$, with the convention $\inf \emptyset = \infty$. For every $N \geq 0$, $\nu \wedge N$ is a bounded stopping time, and by the optional stopping theorem for non-negative supermartingales applied to bounded stopping times (\S\ref{app:martingale:optstop}),
\[
  \mathbb{E}[Y_{\nu \wedge N}] \;\leq\; \mathbb{E}[Y_0] \;\leq\; 1.
\]
On the event $\{\nu \leq N\}$, $Y_{\nu \wedge N} = Y_\nu \geq 1/\alpha$, so
\[
  1 \;\geq\; \mathbb{E}[Y_{\nu \wedge N}] \;\geq\; \mathbb{E}[Y_{\nu \wedge N}\mathbf{1}\{\nu \leq N\}] \;\geq\; \tfrac{1}{\alpha}\,\mathbb{P}(\nu \leq N).
\]
Letting $N \to \infty$ and using monotone convergence,
$\mathbb{P}(\nu < \infty) \leq \alpha$, which is the claim because $\{\sup_n Y_n \geq 1/\alpha\} = \{\nu < \infty\}$.
\end{proof}

Applied to $Y_n = M_n = \hat{P}_n / P_n$ of equation~\eqref{eq:sec4 supermartingale}, the theorem yields Corollary~\ref{cor:upper-bound} of the main text.

\subsection{Optional Stopping for non-negative Supermartingales}
\label{app:martingale:optstop}

\begin{theorem}[Optional stopping, non-negative case]
Let $\{Y_n\}$ be a non-negative supermartingale with respect to $\{\mathcal{H}_n\}$, and let $\nu$ be any stopping time. Then
\[
  \mathbb{E}[Y_\nu \mathbf{1}\{\nu < \infty\}] \;\leq\; \mathbb{E}[Y_0].
\]
In particular, for any bounded stopping time $\nu \leq N$, $\mathbb{E}[Y_\nu] \leq \mathbb{E}[Y_0]$.
\end{theorem}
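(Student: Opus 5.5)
The plan is to prove the statement in two stages. First I will show $\mathbb{E}[Y_{\nu\wedge N}]\le\mathbb{E}[Y_0]$ for every fixed $N$, which also gives the bounded case directly. Then I will let $N\to\infty$ and pass to the limit using non-negativity.

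For the first stage, I will use the telescoping decomposition
\[
Y_{\nu\wedge N} \;=\; Y_0 + \sum_{n=0}^{N-1} \mathbf{1}\{\nu > n\}\,\bigl(Y_{n+1}-Y_n\bigr).
\]
It holds because for $n<\nu\wedge N$ the increments accumulate, and afterwards the indicator vanishes. The event $\{\nu>n\}=\{\nu\le n\}^c$ lies in $\mathcal{H}_n$ by the definition of stopping time. So each term can be handled by the tower property:
\[
\mathbb{E}\bigl[\mathbf{1}\{\nu>n\}(Y_{n+1}-Y_n)\bigr] \;=\; \mathbb{E}\bigl[\mathbf{1}\{\nu>n\}\bigl(\mathbb{E}[Y_{n+1}\mid\mathcal{H}_n]-Y_n\bigr)\bigr] \;\le\; 0.
\]
Integrability is not an issue, since $\mathbb{E}|Y_n|<\infty$ is part of the supermartingale definition in \S\ref{app:martingale:defs} and the sum is finite. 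Summing over $n$ gives $\mathbb{E}[Y_{\nu\wedge N}]\le\mathbb{E}[Y_0]$. If $\nu\le N$ almost surely, then $Y_{\nu\wedge N}=Y_\nu$, which is the second claim.

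For the general claim, I will use $Y\ge0$ to write
\[
Y_\nu\mathbf{1}\{\nu<\infty\} \;=\; \lim_{N\to\infty} Y_{\nu\wedge N}\mathbf{1}\{\nu\le N\} \quad\text{pointwise}.
\]
Each term satisfies $0\le Y_{\nu\wedge N}\mathbf{1}\{\nu\le N\}\le Y_{\nu\wedge N}$. By Fatou's lemma,
\[
\mathbb{E}[Y_\nu\mathbf{1}\{\nu<\infty\}] \;\le\; \liminf_{N\to\infty}\mathbb{E}[Y_{\nu\wedge N}] \;\le\; \mathbb{E}[Y_0].
\]

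The only delicate step is this limit passage. Monotone convergence is not available, because $Y_{\nu\wedge N}$ need not be monotone in $N$. Non-negativity is exactly what makes Fatou applicable; without it the unbounded case can fail. I will also note that the products $\mathbf{1}\{\nu>n\}Y_{n+1}$ are integrable so the expectations are well defined, but this is routine.
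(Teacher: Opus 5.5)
Your proof is correct and takes the same route as the paper's sketch. The indicator-weighted telescoping sum with $\{\nu>n\}\in\mathcal{H}_n$ is the paper's ``sum the one-step supermartingale inequalities along the path'' step, written out in full. Your Fatou passage, applied to $Y_{\nu\wedge N}\mathbf{1}\{\nu\le N\}$ rather than to $Y_{\nu\wedge N}$ itself, is the same limiting argument the paper invokes for the unbounded case.
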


\begin{proof}[Proof sketch]
For bounded $\nu$, sum the one-step supermartingale inequalities along the path, using adaptedness of $\{\nu \leq n\} \in \mathcal{H}_n$ to move indicators in and out of the conditional expectation. The unbounded case follows from Fatou's lemma applied to $Y_{\nu \wedge N}$ as $N \to \infty$. Full details are standard~\cite{williams1991probability}.
\end{proof}

For \projname, $L$ is finite almost surely (Algorithm~\ref{alg:\projname} terminates when $K_\ell > 0$ or at a prescribed maximum level), so $\ell^* \leq L$ is a bounded stopping time and the first form of the theorem applies directly, giving Corollary~\ref{cor:anytime}.

\subsection{Level-Wise Filtration for \projname}
\label{sec:filtration}

Recall from Section~\ref{sec:method-rulers} that the failure event $\mathcal{F} = \{x \in \mathcal{X} : G(f_\theta(x)) \geq \gamma_F\}$ is decomposed into nested intermediate events $\mathcal{X} = \mathcal{F}_0 \supset \mathcal{F}_1 \supset \cdots \supset \mathcal{F}_L = \mathcal{F}$ with $\mathcal{F}_l = \{x : G(f_\theta(x)) \geq \gamma_l\}$, and the failure probability factored as $p_f = \prod_{l=1}^{L} \mathbb{P}(\mathcal{F}_l \mid \mathcal{F}_{l-1})$. As the algorithm advances from level~$0$ to level~$l$, the accumulated information comprises encoder outputs, ruler scores, surviving samples, thresholds, and MCMC inner randomness. We formalise this as the \emph{level-wise filtration}
\begin{equation}\label{eq:filtration}
  \mathcal{G}_l
  \;=\;
  \sigma\!\Big(
    \bigl\{f_\theta(X_j^{(i)}),\;G\bigl(f_\theta(X_j^{(i)})\bigr)\bigr\}_{j=1}^{N},\;
    \gamma_{1:l}
    \;\Big|\; 0 \leq i \leq l
  \Big),
  \quad
  \mathcal{G}_0 \subset \cdots \subset \mathcal{G}_L,
\end{equation}
where $\mathcal{G}$ distinguishes the filtration from the failure events $\mathcal{F}_l$.

Two properties of $\{\mathcal{G}_l\}$ are consumed by the supermartingale argument of Theorem~\ref{thm:supermartingale}:
\begin{enumerate}[label=(\roman*),nosep]
\item \textbf{Measurability of thresholds and estimates.}\;
  The adaptive threshold $\gamma_l$ is $\mathcal{G}_{l-1}$-measurable, since it is the $(1{-}\rho)$-quantile of level-$(l{-}1)$ ruler scores. The level-$l$ estimate $\hat{p}_l$ is $\mathcal{G}_l$-measurable.
\item \textbf{Conditional randomness.}\;
  Given $\mathcal{G}_l$, the MCMC proposals at level~$l{+}1$ remain stochastic: they depend only on the current surviving population and the transition kernel, not on future randomness.
\end{enumerate}

Property~(i) enables pulling $\mathcal{G}_{l-1}$-measurable factors out of conditional expectations and property~(ii) ensures the remaining MCMC randomness is genuinely conditional on $\mathcal{G}_l$; together they separate revealed randomness from the residual randomness that produces the conditional expectation in the supermartingale inequality. The finer step-wise filtration $\mathcal{G}_{l,k}$ that tracks individual MCMC proposals within a level is deferred to \S\ref{app:martingale:stepwise}; the coarser level-wise version suffices for all guarantees in the main text.

\subsection{Step-Wise Filtration for \projname}
\label{app:martingale:stepwise}

The level-wise filtration $\{\mathcal{G}_l\}$ of \S\ref{sec:filtration} is a coarsening of a finer filtration that tracks each individual MCMC accept/reject decision within a level. The finer filtration is needed for two kinds of analysis that the main text does not carry out: (i) within-level concentration bounds on $\hat{p}_l$, and (ii) anytime-valid stopping at MCMC-step granularity rather than at level boundaries.

\paragraph{Construction.}
Fix a level $l \geq 1$ and suppose the MCMC scheme at that level performs $K_l$ proposal-accept/reject steps across $N$ chains, producing the level-$l$ population $\{X_j^{(l)}\}_{j=1}^{N}$. For $k = 0,1,\dots,K_l$ let $\Xi_{l,k}$ denote the internal MCMC state after the $k$-th step: the chain positions, accept/reject indicator, and the encoder outputs and ruler scores evaluated at the current proposals. Define
\begin{equation}\label{eq:stepwise-filt}
  \mathcal{G}_{l,k}
  \;=\;
  \sigma\!\Big(\mathcal{G}_{l-1},\;\Xi_{l,1},\dots,\Xi_{l,k}\Big),
  \qquad
  \mathcal{G}_{l,0} = \mathcal{G}_{l-1},\;\; \mathcal{G}_{l,K_l} = \mathcal{G}_l.
\end{equation}
The resulting doubly-indexed family satisfies
\[
  \mathcal{G}_{l-1} = \mathcal{G}_{l,0}
    \subset \mathcal{G}_{l,1}
    \subset \cdots
    \subset \mathcal{G}_{l,K_l}
    = \mathcal{G}_l,
\]
so $\{\mathcal{G}_{l,k}\}$ is a genuine refinement of $\{\mathcal{G}_l\}$ on a lexicographic index.

\paragraph{Within-level analysis (sketch).}
Constructing a non-negative supermartingale with respect to the refined filtration $\{\mathcal{G}_{l,k}\}$ requires more than per-step unbiasedness. The running mean $\hat{p}_l^{(k)} = \frac{1}{k}\sum_{i=1}^{k}\mathbb{I}\{X_i^{(l)} \in \mathcal{F}_l\}$ is a consistent estimator of $p_l$ but is not itself a martingale in $k$, since increments $\hat{p}_l^{(k)} - \hat{p}_l^{(k-1)}$ have conditional mean $(p_l - \hat{p}_l^{(k-1)})/k$ rather than zero. The standard route to a within-level supermartingale is to apply mixture-supermartingale constructions for the sequence of indicators~\cite{howard2021time}; we do not pursue this in the present paper, and the main-text guarantees are stated only at level boundaries (where $\hat{p}_l$ is taken as the final empirical mean over all $N$ chains in the level).

\paragraph{Why the main text uses the coarser filtration.}
The anytime bound at MCMC-step granularity is strictly stronger than the level-wise bound but requires an assumption that the chain is stationary from the first step. In practice \projname\ discards a burn-in initial window, and the strict supermartingale property is established only at the level boundary where the burn-in has been absorbed. The level-wise filtration of \S\ref{sec:filtration} is the correct granularity for the guarantees claimed in the main text, and the step-wise refinement here is offered for readers who wish to analyse the within-level process under stronger mixing assumptions.

\subsection{\projname\ Instantiation}
\label{app:martingale:\projname}

This subsection closes the loop between the abstract theorems above and the concrete \projname\ process. The objects are mapped as follows:

\begin{itemize}[nosep]
\item The probability space $(\Omega,\mathcal{F},\mathbb{P})$ carries the input distribution, the initial samples, the MCMC inner randomness across all chains and levels, and any data-driven encoder nondeterminism (if present).
\item The filtration is $\{\mathcal{G}_l\}$ of equation~\eqref{eq:filtration}, or equivalently $\{\mathcal{G}_{l,k}\}$ of equation~\eqref{eq:stepwise-filt} at the finer granularity.
\item The process is $M_l = \hat{P}_l / P_l$ of equation~\eqref{eq:sec4 supermartingale}.
\item The stopping time is $\ell^*$ of Corollary~\ref{cor:anytime}, either failure-detection or confidence-based.
\end{itemize}

\paragraph{Proof of Theorem~\ref{thm:supermartingale} (measurability and supermartingale inequality).}
The adaptive threshold $\gamma_l$ is the empirical $(1-\rho)$-quantile of $\{G(f_\theta(X_j^{(l-1)}))\}_j$, a fixed measurable function of $\mathcal{G}_{l-1}$-measurable data, so $\gamma_l \in \mathcal{G}_{l-1}$. The level-$l$ acceptance count is exactly $\rho N$ by construction of the quantile, so the intermediate-level estimate $\hat{p}_l = \rho$ is $\mathcal{G}_{l-1}$-measurable; at the terminal level $\hat{p}_L = K_L/N$ is a measurable function of level-$L$ ruler scores, hence $\mathcal{G}_L$-measurable. Each factor $\hat{p}_l/p_l$ is therefore $\mathcal{G}_l$-measurable and $M_{l-1}$ is $\mathcal{G}_{l-1}$-measurable. Pulling the $\mathcal{G}_{l-1}$-measurable factor out of the conditional expectation,
\[
  \mathbb{E}[M_l \mid \mathcal{G}_{l-1}]
  \;=\; M_{l-1}\cdot\mathbb{E}[\hat{p}_l/p_l \mid \mathcal{G}_{l-1}]
  \;\leq\; M_{l-1},
\]
where the inequality uses Assumption~\ref{ass:conservative}. Nonnegativity holds because $\hat{p}_l \geq 0$ and $p_l > 0$ (the quantile construction keeps every intermediate event non-empty), completing the proof.

\paragraph{When does Assumption~\ref{ass:conservative} hold?} At intermediate levels, $\hat{p}_l = \rho$ is deterministic, so the assumption reduces to $\rho \leq p_l$, i.e.\ the targeted quantile fraction does not exceed the true conditional probability $p_l = \mathbb{P}(\mathcal{F}_l \mid \mathcal{G}_{l-1})$. Under exact MCMC stationarity with respect to $\mathbb{P}(\cdot \mid \mathcal{F}_{l-1})$, the empirical $(1-\rho)$-quantile converges to the true $(1-\rho)$-quantile and $p_l \to \rho$ as $N\to\infty$, so the design intent of the adaptive quantile rule is exactly $p_l = \rho$. At finite $N$, $p_l$ is a random variable whose distribution depends on the ruler-score density near the $(1-\rho)$-quantile; classical analyses of adaptive Subset Simulation~\cite{au2001estimation} show that the resulting estimator has bias of order $O(1/N)$, but the direction of this bias is not universally guaranteed to be conservative. The closely related Adaptive Multilevel Splitting algorithm has been shown to be exactly unbiased under idealised resampling via a Doob-type stopping argument~\cite{brehier2016unbiased}; Assumption~\ref{ass:conservative} is the condition under which the same martingale machinery extends to the data-driven, representation-aware setting of \projname. In regimes where ruler-score densities are smooth and $N$ is large (the regime of all experiments in this paper), it is empirically satisfied, as evidenced by the tight observed envelopes reported below. At the terminal level, $\hat{p}_L = K_L/N$ is the empirical mean of an indicator on a stationary chain, which is unbiased: $\mathbb{E}[\hat{p}_L \mid \mathcal{G}_{L-1}] = p_L$, so the equality case of Assumption~\ref{ass:conservative} holds at level $L$ regardless.

\paragraph{Constants of the envelope.}
The envelope of Corollary~\ref{cor:upper-bound} scales with $1/\delta$. The bound is therefore loose at conventional confidence levels: at $\delta = 0.05$ and $\delta = 0.10$ it permits $20\times$ and $10\times$ overshoot respectively, far above the empirical ratios reported in \S\ref{sec:exp-cv} (where $\hat{P}_f / \bar{p}_f^{\,\text{SMC}}$ lies between $0.997$ and $1.002$ across the five \projname\ rulers in Table~\ref{tab:exp2-headtohead}) and \S\ref{sec:exp-llm} (mean $|\hat{p}_f / p_f^{\mathrm{ref}} - 1| \approx 0.026$ for PC1/$p75$). The envelope should therefore be read as a qualitative validity guarantee that overshoot is controlled uniformly across the cascade path, rather than as a tight quantitative bound. Tighter envelopes via mixture supermartingales~\cite{howard2021time} or by adapting the central limit theorem of Cérou et al.~\cite{cerou2019historical} to the data-driven setting are left to future work.

\paragraph{Calibration constant.}
Equation~\eqref{eq:cal-theory} multiplies $\hat{P}_{\ell^*}$ by $C_\tau = \mathrm{precision}(\tau)/\mathrm{recall}(\tau)$. Because $C_\tau$ is a deterministic function of the fixed offline dataset, it is $\mathcal{G}_0$-measurable, and the building-block lemma of \S\ref{app:martingale:defs} implies $\{C_\tau M_l\}$ is a non-negative supermartingale with respect to $\{\mathcal{G}_l\}$. Consequently Ville's inequality applies to the calibrated process, giving $\mathbb{P}(\hat{P}_f^{\,\mathrm{cal}} \geq C_\tau P_{\ell^*}/\delta) \leq \delta$: calibration does not break anytime validity, it shifts the envelope by the known constant~$C_\tau$.

\paragraph{Universal SS-mechanism requirement.}
The anytime bound relies on the SS-mechanism condition of \S\ref{sec:method-thresholds}, namely that the last-level event coincides with the true failure event $\mathcal{F}$ so that $\hat{p}_L = K_L/N$ is an unbiased estimate of $p_L$ under MCMC stationarity, while Assumption~\ref{ass:conservative} controls the intermediate-level factors. No precision/recall correction is required for the bound to hold, and the universal statement of anytime validity therefore applies to every \projname\ run in which the SS-mechanism is satisfied.

\paragraph{Summary.}
Theorem~\ref{thm:supermartingale} and Corollary~\ref{cor:anytime} together guarantee that \projname\ failure-probability estimates are anytime-valid: one may inspect the running estimate at any level, stop early when a pre-specified confidence or detection criterion is met, and still trust that the reported probability is a statistically valid upper bound. The data-driven design of intermediate events and thresholds does not invalidate the guarantee, provided the MCMC proposals satisfy detailed balance and the chains mix adequately. The optional precision/recall calibration preserves the guarantee up to a known multiplicative constant. Both conditions are verified empirically in \S\ref{sec:exp-cv} and \S\ref{sec:exp-llm}.

%% file: Sections/Appendix_Runtime.tex
\section{Cross-Device Runtime Calibration and Hardware}
\label{app:runtime-calibration}

The vision experiments ran across three heterogeneous machines, so raw wall-clock
numbers are not directly comparable. All runtimes quoted in
Section~\ref{sec:exp-cv} are therefore reported in a single calibrated unit,
using Traditional Subset Simulation on Dawn CPU nodes as the reference.

\paragraph{Hardware inventory.}
\begin{itemize}[leftmargin=*,itemsep=1pt,topsep=2pt]
  \item \textbf{Dawn CPU (reference).} Intel Xeon CPU partition on the Dawn
        cluster. Used for Traditional SS runs. Single core per seed, no GPU.
  \item \textbf{Workstation GPU.} Local workstation with an NVIDIA GPU used
        for \projname\ ruler-screening and encoder forward passes.
  \item \textbf{Laptop CPU.} Local laptop used for SMC sanity runs on
        subsets of seeds.
\end{itemize}
The three devices differ in clock rate, memory bandwidth, and I/O, so per-sample
wall time varies by method-device pair. Within a fixed (method, device) pair we
treat the per-sample wall time as a constant.

\paragraph{Reference rate.}
The Dawn SS reference rate is measured from a single long run:
\begin{equation*}
  T_{\mathrm{ref}} \;=\; \frac{289.46~\text{s}}{447{,}160~\text{queries}}
  \;=\; 6.473\times10^{-4}~\text{s/query}.
\end{equation*}
This rate is treated as the canonical cost of one latent-space query and is used
to project every method's query count onto a common wall-clock axis.

\paragraph{Calibration rule.}
For each method $m$ we report
\begin{equation*}
  \widehat{\mathrm{runtime}}_m \;=\; \overline{\mathrm{queries}}_m \cdot T_{\mathrm{ref}},
\end{equation*}
where $\overline{\mathrm{queries}}_m$ is the mean number of forward-model queries
per seed. The raw \texttt{runtime\_mean} column in
\texttt{exp2\_cv\_method\_summary.csv} is never used directly for \projname\ or SMC
because those runs were produced on non-reference hardware.

\paragraph{Calibrated numbers used in the main text.}
\begin{itemize}[leftmargin=*,itemsep=1pt,topsep=2pt]
  \item Traditional SS (Dawn, reference): $289.5$\,s / seed.
  \item \projname\ top-5 rulers: $428$--$463$\,s / seed (Table~\ref{tab:exp2-headtohead}).
  \item SMC ground truth ($10^6$ perturbations): $647.3$\,s / seed.
\end{itemize}

\paragraph{Assumptions and caveats.}
The calibration assumes a constant per-sample cost on the reference device,
which is accurate for the fixed-architecture MNIST classifier used in Part~1
but would need re-measurement for encoders with variable-cost forward passes.
Cross-device ratios are treated as multiplicative constants and do not account
for queue-wait time on Dawn. The reported runtimes therefore reflect pure
compute cost, not end-to-end time-to-result.

\paragraph{Reproducibility.}
The calibration script
(\texttt{Experiments/Part\,one/calibrate\_runtime.py}) reads the per-method
query counts from \texttt{exp2\_cv\_method\_summary.csv}, applies
$T_{\mathrm{ref}}$, and writes the calibrated runtimes to
\texttt{smc\_runtime\_calibrated.csv}. Re-running the script with a new
reference rate reproduces every runtime column in
Table~\ref{tab:exp2-headtohead}.

%% file: Sections/Appendix_LLM_JB.tex

\section{LLM Transfer Challenges}
\label{app:llm-transfer}

This appendix documents the structural transfer challenges and diagnostics that motivate the calibrated 1D-projection design in Section~\ref{sec:exp-llm}.

\subsection{The Mahalanobis Ruler is Structurally Broken}
\label{app:mah-broken}

The vision experiment's top-performing ruler family, Mahalanobis distance to the failure centroid (R8 in Appendix~\ref{app:rulers}), fails
categorically in LLM latent space. Cohen's $d$ across the five
turns ranges from $-0.14$ to $+0.19$, effectively zero. The apparent
$0.7$ ratio produced under the inherited configuration is a count
coincidence: the ruler concentrates records near the
$p_{\text{unsafe}}$-weighted centroid (typically
$p_{\text{unsafe}} \approx 0.42$--$0.77$), which is not the actual
jailbreak region ($p_{\text{unsafe}} > 0.80$).

The root cause is noise-dimension dominance. $56$ of $64$ projected
dimensions are dominated by the regularisation floor ($10^{-5}$).
Inverse-covariance eigenvalues reach ${\sim}10^{5}$ in these noise
dimensions against ${\sim}60$ in the data dimensions. The
Mahalanobis metric therefore amplifies noise and is direction
agnostic. We evaluated five centroid-reweighting variants (power,
exponential, jailbreak-only, contrast, and a hybrid). None fixes the
amplification.
This failure is consistent with, rather than contradicting, the general
principle that distance-based metrics work well in supervised
classifier feature spaces~\citep{lee2018mahalanobis}: Llama-Guard's
projection head is trained for next-token classification rather than
for class-conditional Gaussian feature regularisation, so the
conditions under which Mahalanobis is well-behaved are not met here.

\subsection{Recall and Precision}
\label{app:recall}

Treating $\{G(f_\theta(x)) \geq \tau\}$ as the ruler's positive region and $\mathcal{D}_{\text{bad}}$ as the ground-truth failure set,
\begin{align}
\label{eq:prec-recall-def}
\mathrm{precision}(\tau) &\;=\; \frac{\bigl|\{x \in \mathcal{D}_{\text{bad}} : G(f_\theta(x)) \geq \tau\}\bigr|}{\bigl|\{x \in \mathcal{D}_{\text{good}} \cup \mathcal{D}_{\text{bad}} : G(f_\theta(x)) \geq \tau\}\bigr|}, \\
\mathrm{recall}(\tau) &\;=\; \frac{\bigl|\{x \in \mathcal{D}_{\text{bad}} : G(f_\theta(x)) \geq \tau\}\bigr|}{|\mathcal{D}_{\text{bad}}|}, \nonumber
\end{align}
i.e.\ the fraction of above-threshold embeddings that are genuine failures, and the fraction of labelled failures retained at~$\tau$, respectively.

The correction is algebraic rather than empirical: $\mathbb{P}(G \geq \tau) \cdot \mathbb{P}(\mathcal{F} \mid G \geq \tau) = \mathbb{P}(\mathcal{F} \wedge G \geq \tau) = \mathbb{P}(\mathcal{F}) \cdot \mathrm{recall}(\tau)$, so dividing by $\mathrm{recall}(\tau)$ recovers $\mathbb{P}(\mathcal{F})$. Precision and recall are constants computed once from the labelled embeddings and do not depend on the test-time population; the identity is exact under the assumption that these two quantities are invariant across that population, which holds when the benign mass sits far from~$\tau$, the regime in which linear-projection rulers operate.

\subsection{More Samples Cannot Fix a Ruler Problem}
\label{app:n-cannot-fix}

An initial population grid $N \in \{20\text{k}, 50\text{k},
100\text{k}, 200\text{k}, 500\text{k}\}$ confirms the classical
$\text{SE} \propto 1/\!\sqrt{N}$ scaling. It also confirms that
variance reduction does not correct systematic bias. The ratio
centroid under the Mahalanobis ruler stays at about $0.7$ across all
$N$. The problem is ruler quality, not sample size.

\subsection{GCG-Style Transfer}
\label{app:gcg-transfer}

Table~\ref{tab:gcg-transfer} reports the GCG-style transfer
check referenced in Section~\ref{sec:exp-llm}.
Targets are fixed by the constructed latent fleet: the jailbreak
target is $\eta \cdot 0.076$.
Thus $\eta=10^{-3}$ has target $7.60\times10^{-5}$, and
$\eta=10^{-4}$ has target $7.60\times10^{-6}$.
The same PAIR-derived $p75$ score directions and percentile
thresholds are reused.
Two calibrations are compared: PAIR-inherited calibration keeps
the PAIR precision/recall scale, while GCG-refit calibration
recomputes precision and recall on the GCG latent distribution.
The table shows that the score/threshold ruler transfers, but
calibration must be refit.

\begin{table}[h]
\centering
\scriptsize
\resizebox{\linewidth}{!}{%
\begin{tabular}{llrrlccc}
\toprule
Run & $\eta$ & $N$ & Rounds & Ruler
& PAIR-inherited & GCG-refit (95\% CI) & Refit/target \\
\midrule
$\eta=10^{-3}$
& $10^{-3}$ & $20{,}000$ & $20$ & PC1/$p75$
& $4.38{\times}10^{-5}$ ($0.58{\times}$)
& $7.82{\times}10^{-5}$
  [$6.38{\times}10^{-5}$, $9.27{\times}10^{-5}$]
& $1.03{\times}$ \\

$\eta=10^{-3}$
& $10^{-3}$ & $20{,}000$ & $20$ & Centroid/$p75$
& $4.74{\times}10^{-5}$ ($0.62{\times}$)
& $7.92{\times}10^{-5}$
  [$6.68{\times}10^{-5}$, $9.16{\times}10^{-5}$]
& $1.04{\times}$ \\

$\eta=10^{-4}$
& $10^{-4}$ & $20{,}000$ & $20$ & PC1/$p75$
& $6.18{\times}10^{-6}$ ($0.81{\times}$)
& $1.11{\times}10^{-5}$
  [$5.87{\times}10^{-6}$, $1.62{\times}10^{-5}$]
& $1.45{\times}$ \\

$\eta=10^{-4}$
& $10^{-4}$ & $20{,}000$ & $20$ & Centroid/$p75$
& $5.20{\times}10^{-6}$ ($0.68{\times}$)
& $8.70{\times}10^{-6}$
  [$4.21{\times}10^{-6}$, $1.32{\times}10^{-5}$]
& $1.14{\times}$ \\

$\eta=10^{-4}$, high budget
& $10^{-4}$ & $200{,}000$ & $60$ & PC1/$p75$
& $4.53{\times}10^{-6}$ ($0.60{\times}$)
& $8.10{\times}10^{-6}$
  [$7.22{\times}10^{-6}$, $8.99{\times}10^{-6}$]
& $1.07{\times}$ \\

$\eta=10^{-4}$, high budget
& $10^{-4}$ & $200{,}000$ & $60$ & Centroid/$p75$
& $4.65{\times}10^{-6}$ ($0.61{\times}$)
& $7.77{\times}10^{-6}$
  [$6.81{\times}10^{-6}$, $8.73{\times}10^{-6}$]
& $1.02{\times}$ \\
\bottomrule
\end{tabular}%
}
\caption{GCG-style calibrated jailbreak estimates. PAIR-inherited
calibration reuses the PAIR precision/recall scale; GCG-refit
calibration recomputes precision and recall on the GCG latent
distribution. The 95\% intervals propagate SS threshold-event
uncertainty after refit calibration, but do not include additional
precision/recall-refit uncertainty.}
\label{tab:gcg-transfer}
\end{table}

At $\eta=10^{-3}$, GCG-refit calibration recovers the target well:
PC1/$p75$ is $1.03{\times}$ target and Centroid/$p75$ is
$1.04{\times}$ target.
The inherited PAIR calibration is the weak link, giving only
$0.58{\times}$--$0.62{\times}$ target.
At $\eta=10^{-4}$, the $N=20{,}000$, $20$-round run is useful as a
noise diagnostic but is visibly less stable.
The high-budget $N=200{,}000$, $60$-round run gives the cleaner
rare-event check: after refitting calibration, PC1/$p75$ is
$1.07{\times}$ target and Centroid/$p75$ is $1.02{\times}$ target,
whereas PAIR-inherited calibration remains near
$0.60{\times}$--$0.61{\times}$ target.
The practical lesson is that score directions and percentile
thresholds may transfer across attack styles, but the calibration
constant $C_\tau$ is target-distribution specific.


%% file: Sections/Appendix_Ruler_Selection.tex
\section{Ruler Ranking Details}
\label{app:llm-ranking}

This appendix validates the directional-KL ruler selector used in
Section~\ref{sec:exp-llm}.
The selection problem is family-level: choose the ruler direction
before running rare-event simulations.
Threshold percentiles are then handled by the PC1 sweep in
Section~\ref{sec:exp-llm}.

\subsection{Six-Family Ground Truth}

Six ruler families are evaluated: five learned 1D projection
families plus the inherited Mahalanobis baseline.
Each learned family is swept over $p50$, $p75$, $p90$, and $p95$.
The ground-truth error is the mean relative error
$\overline{|r-1|}$ averaged over five PAIR turns and
$\eta \in \{10^{-3},10^{-2},5{\times}10^{-2},10^{-1}\}$.
The $\eta=10^{-4}$ cells are excluded from the family-ranking
target because they are dominated by finite-sample discreteness.

\begin{table}[h]
\centering
\small
\caption{Ground-truth family ranking and selector values. Each
family is represented by its best threshold variant. Higher
worst-turn directional KL is better; lower error is better.}
\label{tab:selector-family-truth}
\begin{tabular}{llrrrr}
\toprule
Family & Best variant & $\overline{|r-1|}$ & Worst $|r-1|$
& Worst KL & Cohen's $d$ \\
\midrule
\textbf{PC1}      & \texttt{pc1\_p75}      & $\mathbf{0.026}$ & $0.120$ & $\mathbf{2.913}$ & $1.510$ \\
Centroid          & \texttt{centroid\_p75} & $0.028$ & $0.120$ & $2.822$ & $1.526$ \\
Fisher LDA        & \texttt{fisher\_p95}   & $0.172$ & $1.238$ & $0.960$ & $1.619$ \\
Mahalanobis       & \texttt{mah\_bad}      & $0.566$ & $0.958$ & $0.043$ & $0.000$ \\
Logistic Reg.     & \texttt{logreg\_p95}   & $1.151$ & $19.941$ & $1.126$ & $1.726$ \\
CentroidRot90     & \texttt{centroid\_rot90\_p75}
                  & $39.962$ & $225.10$ & $0.037$ & $0.198$ \\
\bottomrule
\end{tabular}
\end{table}

The two simple linear projections, PC1 and Centroid, are the only
families with mean relative error below $3\%$.
Cohen's $d$ ranks Logistic Regression highest, even though it is
more than an order of magnitude worse downstream than PC1.
This motivates using an asymmetric selector tied to the actual
error mechanism.

\begin{figure}[h]
\centering
\includegraphics[width=0.72\linewidth]
{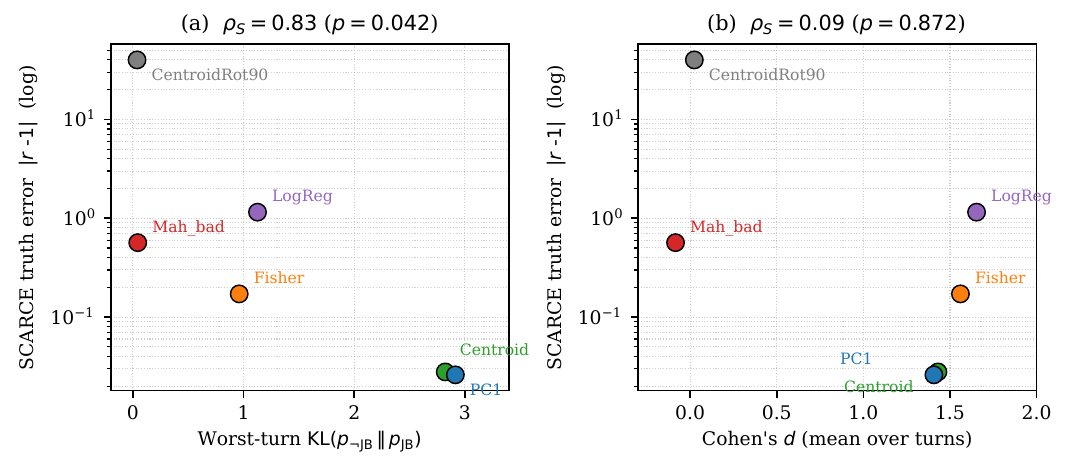}
\caption{Selector diagnostics against downstream ruler-family error.
Worst-turn directional KL is monotone in the true family error and
selects PC1, while Cohen's $d$ rewards in-distribution separation
and selects Logistic Regression.}
\label{fig:selector-kl-vs-cohens}
\end{figure}

\subsection{Why the Direction Matters}

At small $\eta$, the fleet is almost entirely benign.
The dominant error mode is therefore one-sided leakage: benign
mass crossing the surrogate threshold into the jailbreak region.
Let $P_b(\tau)$ be benign mass above threshold, $P_h(\tau)$ be
harmful mass above threshold, and
$\pi=P(\mathrm{JB}\mid\mathrm{PAIR})$.
The calibrated estimate has relative bias
\begin{equation}
\frac{\hat p_{\mathrm{cor}}-\eta\pi}{\eta\pi}
=
\frac{1-\eta}{\eta}
\cdot
\frac{P_b(\tau)}{P_h(\tau)} .
\label{eq:selector-bias-decomp}
\end{equation}
At $\eta=10^{-3}$, the prefactor is approximately $1000$, so a
benign-leakage ratio of only $10^{-3}$ can create $100\%$
relative error.
A symmetric metric averages both directions and can miss this
failure mode.
The directional KL
$\mathrm{KL}(p_{\mathrm{good}}\|p_{\mathrm{bad}})$ instead weights
the score region where benign leakage is carried, which is why it
matches the downstream error ordering.

\subsection{Metric Ranking and Stability}

We benchmarked $14$ candidate metrics; Table~\ref{tab:selector-cv}
shows the representative family-level results used for the main
claim.
Worst-turn aggregation is used because one weak attack turn can
dominate the cascade error, whereas mean aggregation dilutes that
bottleneck.

\begin{table}[h]
\centering
\scriptsize
\caption{Selector robustness on the six-family grid. The $p$ column
is the asymptotic Spearman value; exact permutation for the
headline row is discussed below. LOFO and LOTO denote
leave-one-family-out and leave-one-turn-out minima.}
\label{tab:selector-cv}
\begin{tabular}{p{0.34\linewidth}lrrrr}
\toprule
Metric & Aggr. & $\rho$ & $p$ & LOFO min & LOTO min \\
\midrule
$\mathrm{KL}(p_{\mathrm{good}}\|p_{\mathrm{bad}})$
  & worst & $\mathbf{0.83}$ & $0.042$ & $\mathbf{0.70}$ & $\mathbf{0.71}$ \\
Bhattacharyya distance
  & worst & $0.71$ & $0.111$ & $0.50$ & $0.60$ \\
$\mathrm{KL}(p_{\mathrm{good}}\|p_{\mathrm{bad}})$
  & mean & $0.66$ & $0.156$ & $0.50$ & $0.54$ \\
Cohen's $d$
  & mean & $0.09$ & $0.872$ & $-0.40$ & $-0.14$ \\
Jensen--Shannon divergence
  & mean & $0.09$ & $0.872$ & $-0.40$ & $-0.14$ \\
AUC
  & mean & $0.09$ & $0.872$ & $-0.40$ & $-0.14$ \\
Mutual information
  & mean & $0.09$ & $0.872$ & $-0.40$ & $-0.14$ \\
Wasserstein-1
  & mean & $0.03$ & $0.957$ & $-0.70$ & $0.03$ \\
\bottomrule
\end{tabular}
\end{table}

Worst-turn directional KL ranks PC1 first and correlates with
negative downstream error at Spearman $\rho=0.83$ on the six
families.
Exact permutation over the six family labels gives $p=0.058$
two-sided and $p=0.029$ for the pre-specified positive direction.
A nonparametric bootstrap over families gives a broad $95\%$
percentile interval $[0.00,1.00]$, as expected at $n=6$.
We therefore report the bootstrap as a small-sample caution and
rely on the leave-one-family-out and leave-one-turn-out checks as
the more interpretable stability diagnostics.

The variant-level files also contain an exploratory hybrid selector
that combines family-level KL with threshold-level calibration
statistics. It ranks PC1/$p75$ first, but we do not use it as the
main claim because it mixes two decisions: selecting the ruler
family and selecting the operating percentile.

%% file: Sections/Appendix_Plots.tex
\section{Additional Plots}
\label{app:plots}

This appendix collects supplementary figures that support the efficiency and
reliability claims in Section~\ref{sec:exp-cv} and the scaling sweeps in
Section~\ref{sec:exp-llm}. They are omitted from the main body to keep the
discussion focused on the headline comparisons.

\subsection*{Part~I: Vision Efficiency}
\label{app:plots-part1}

\begin{figure}[H]
    \centering
    \subfloat[$\log_{10}\mae(p_f)$ over the 13 candidate rulers. \label{fig:app-exp1-mae}]{
    \includegraphics[width=0.48\linewidth]{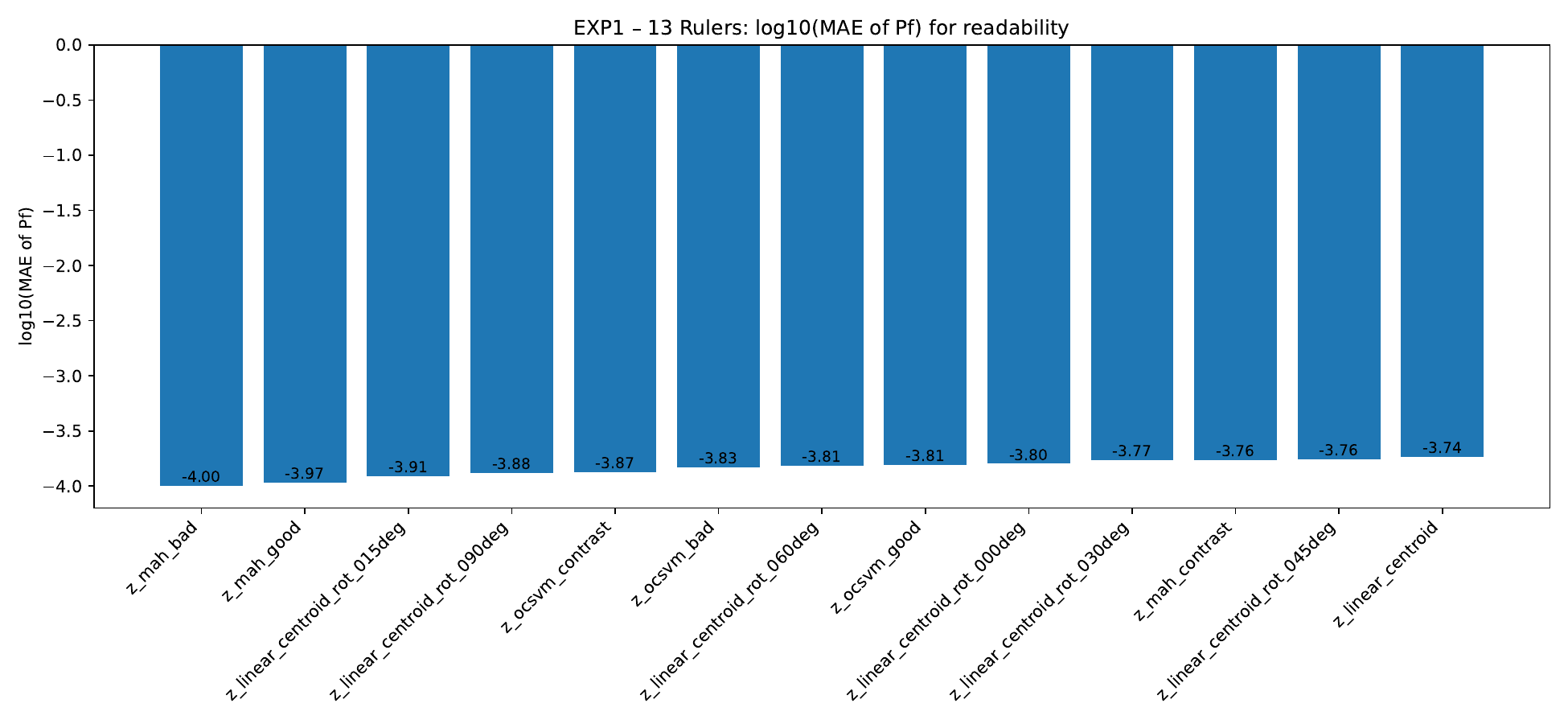}
    }
    \hfill
    \subfloat[Missing rare-event hits over 50 flipped seeds.\label{fig:app-exp1-missing}]{
    \includegraphics[width=0.48\linewidth]{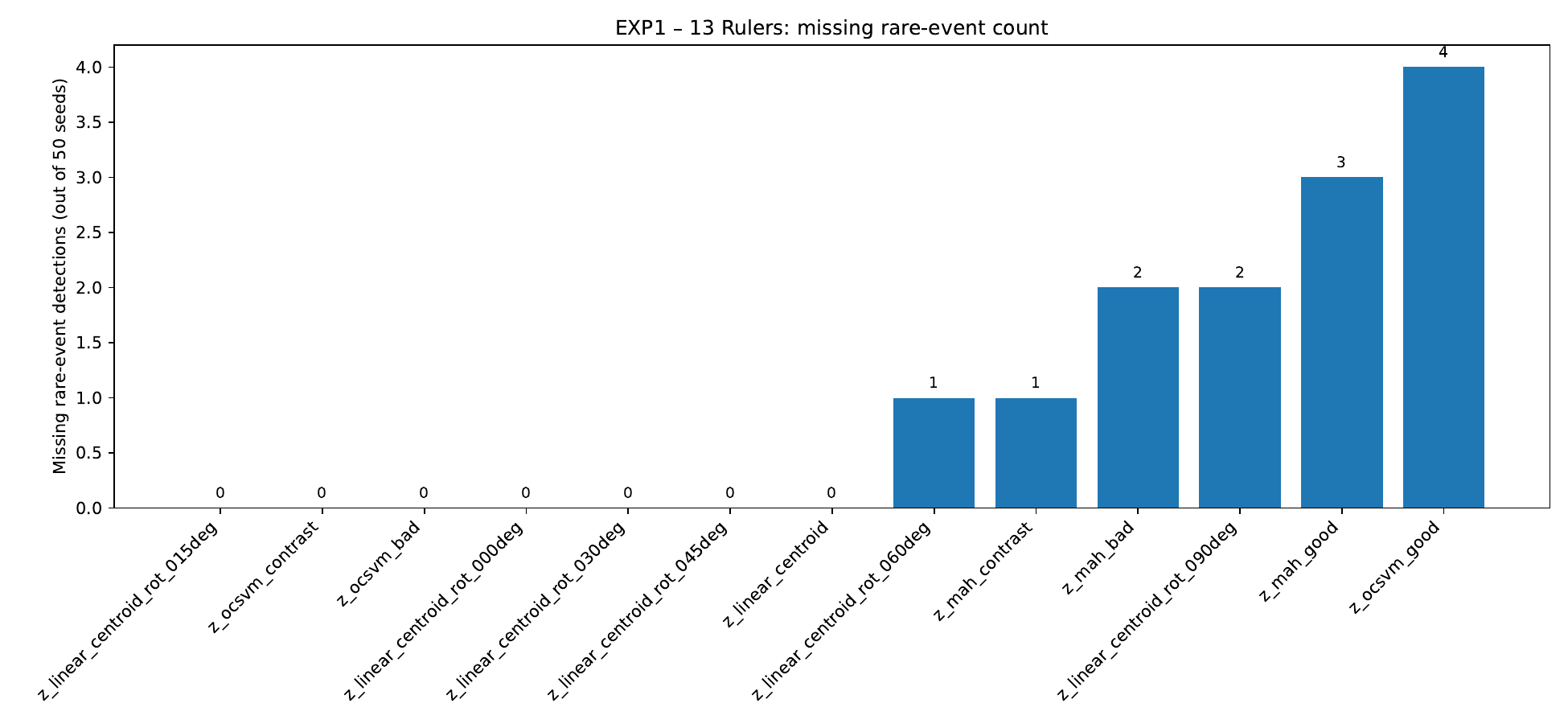}
    }
    \caption{Step~1 ruler pre-screening on 50 flipped MNIST seeds. Bad-anchored variants dominate the 13-ruler catalogue in both estimation error and rare-event recall, motivating the top-5 ruler set used in Section~\ref{sec:exp-cv}.}

    \label{fig:app-exp1-screening}
\end{figure}

\begin{figure}[H]
    \centering
    \subfloat[Mean query count per ruler. \label{fig:app-exp1-queries}]{
    \includegraphics[width=0.48\linewidth]{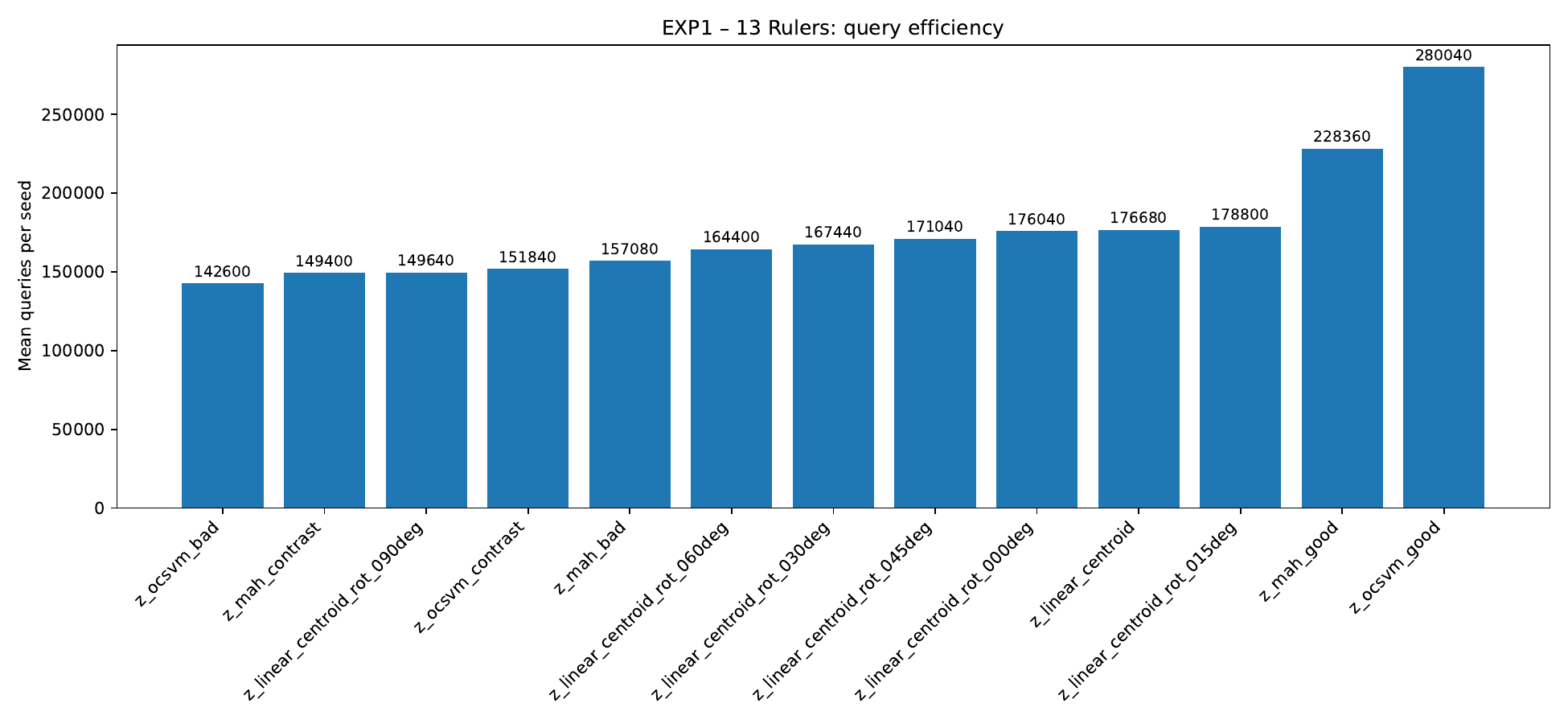}
    }
    \hfill
    \subfloat[Calibrated mean runtime per ruler.\label{fig:app-exp1-runtime}]{
    \includegraphics[width=0.48\linewidth]{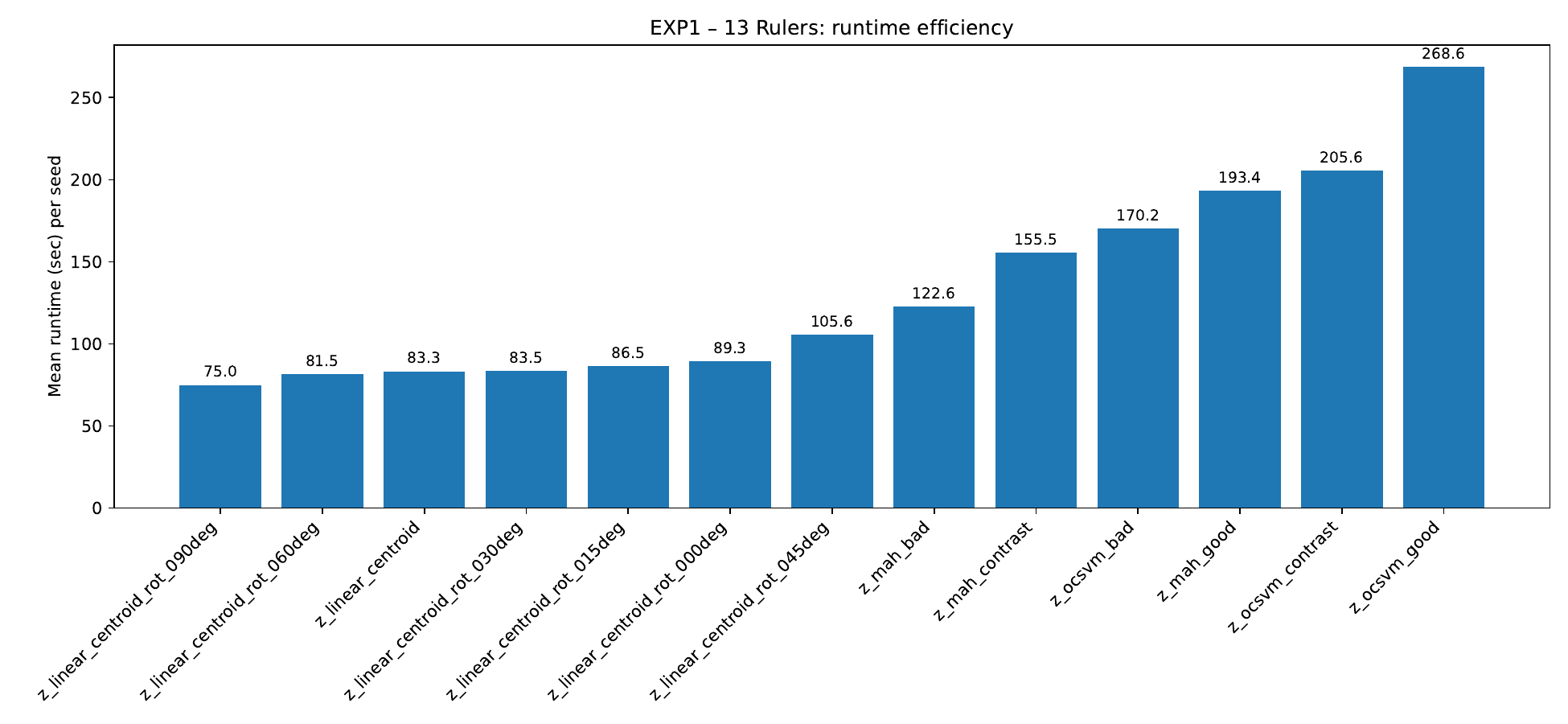}
    }
    \caption{Step~1 efficiency budgets. The rulers selected by accuracy and reliability also avoid the largest query and runtime costs, so the screening stage identifies a practical Pareto region rather than a purely accurate but expensive corner.}

    \label{fig:app-exp1-efficiency-bars}
\end{figure}

\begin{figure}[H]
    \centering
    \subfloat[Queries versus $\mae(p_f)$. \label{fig:app-exp1-query-scatter}]{
    \includegraphics[width=0.48\linewidth]{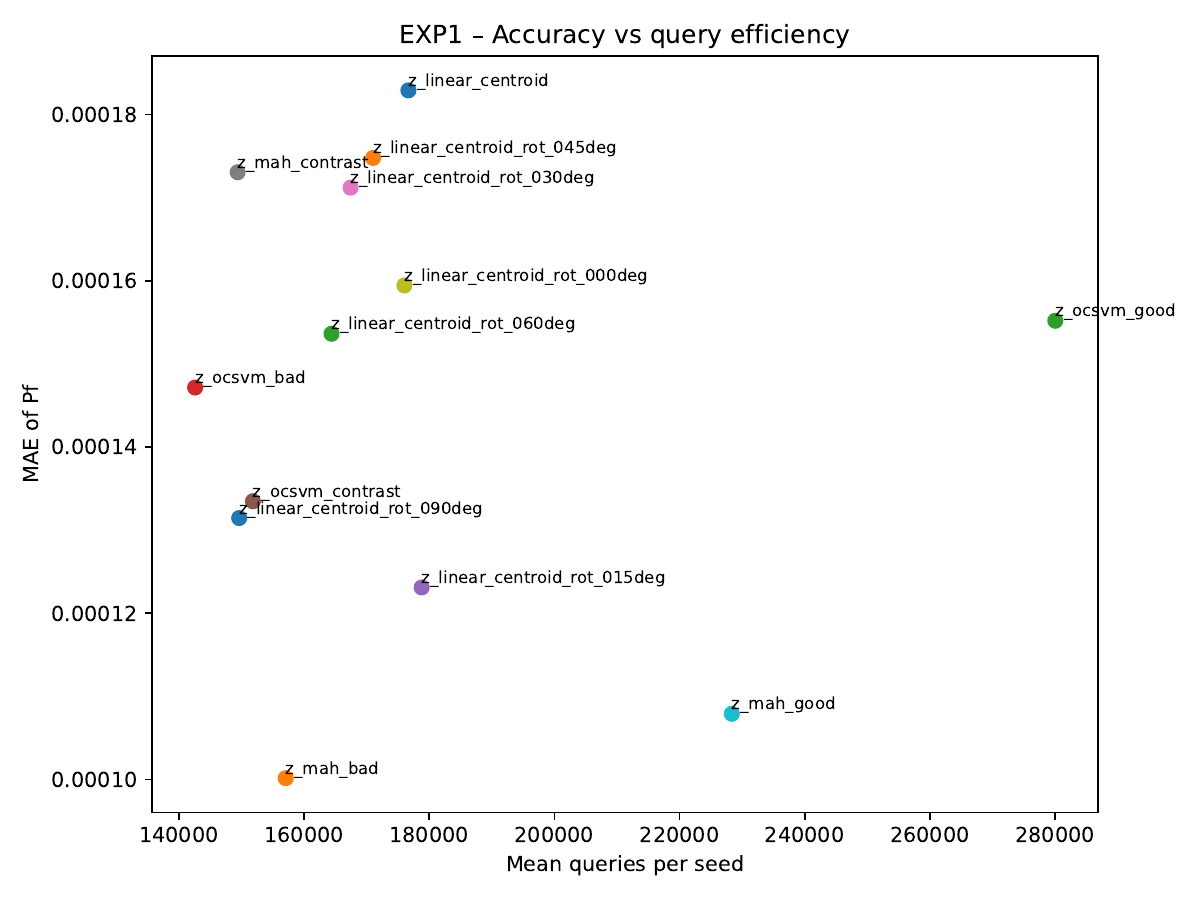}
    }
    \hfill
    \subfloat[Runtime versus $\mae(p_f)$.\label{fig:app-exp1-runtime-scatter}]{
    \includegraphics[width=0.48\linewidth]{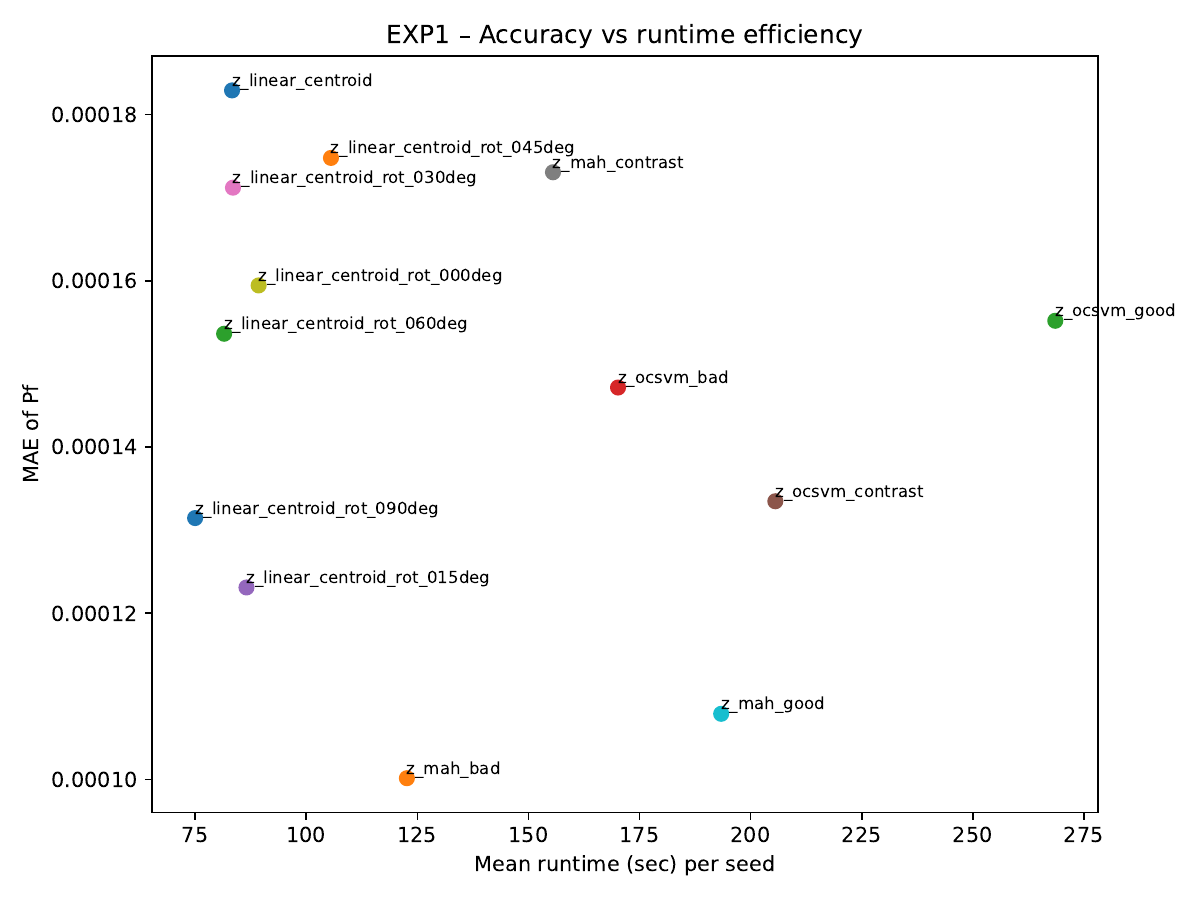}
    }
    \caption{Step~1 Pareto views. Bad-anchored geometric rulers occupy the low-error and
    moderate-budget region, while weakly aligned rulers are both less accurate and more expensive.}

    \label{fig:app-exp1-pareto}
\end{figure}

\begin{figure}[H]
    \centering
    \includegraphics[width=0.75\linewidth]{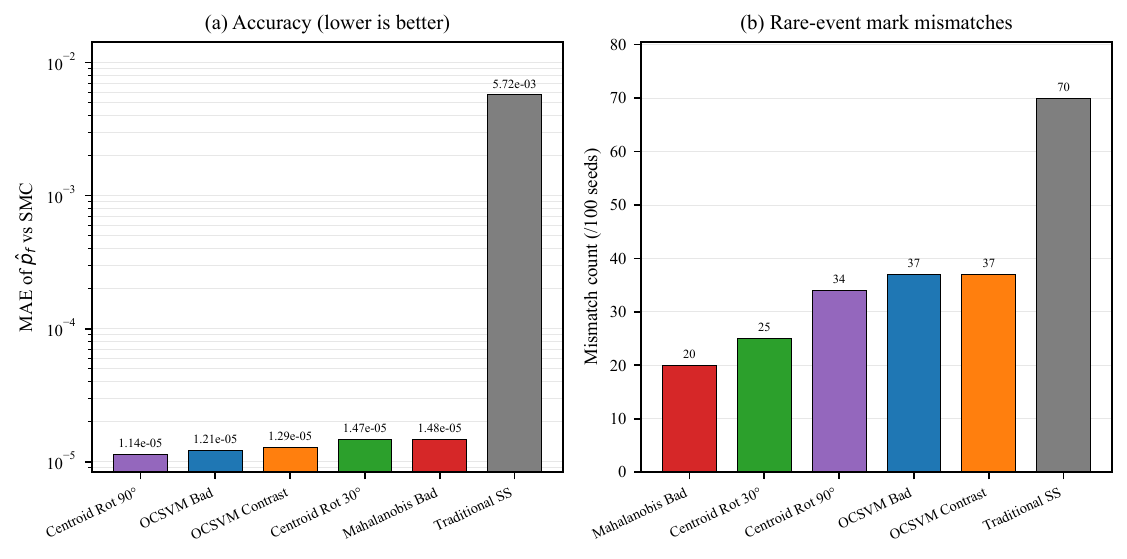}
    \caption{Step~2 MNIST comparison with accuracy and mismatch counts. \projname\ rulers keep absolute error near the SMC reference and produce far fewer miscounts.}
    \label{fig:app-exp2-accuracy-mismatch}
\end{figure}

\begin{figure}[H]
    \centering
    \includegraphics[width=0.75\linewidth]{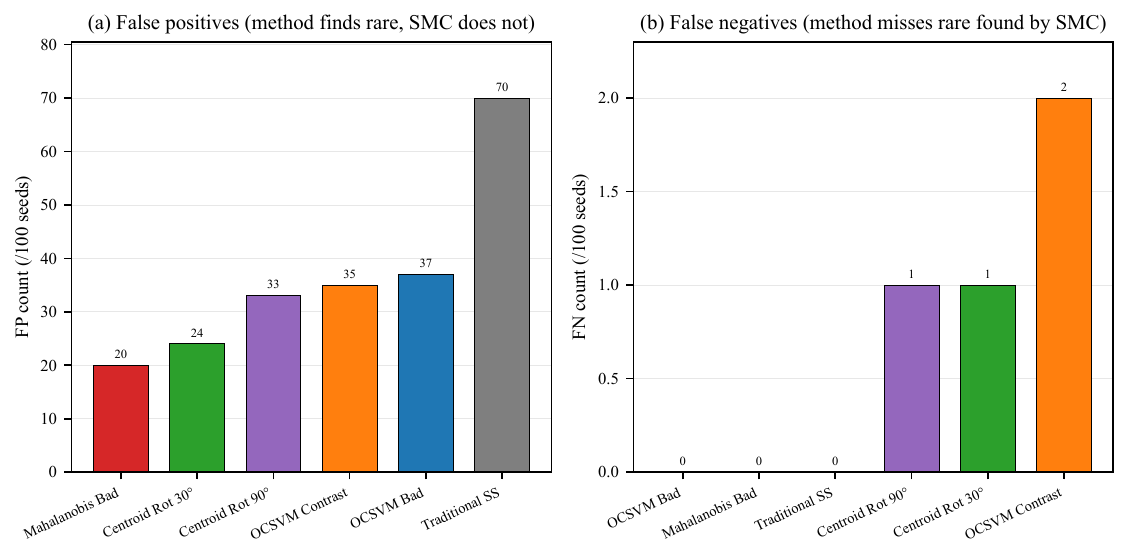}
    \caption{Miscounts over 100 held-out seeds. Traditional SS attains zero false negatives at the cost of 70 false positives, showing that its apparent efficiency is caused by premature threshold acceptance rather than a better rare-event estimate.}
    \label{fig:app-exp2-fp-fn}
\end{figure}

\begin{figure}[H]
  \centering
  \includegraphics[width=0.7\linewidth]{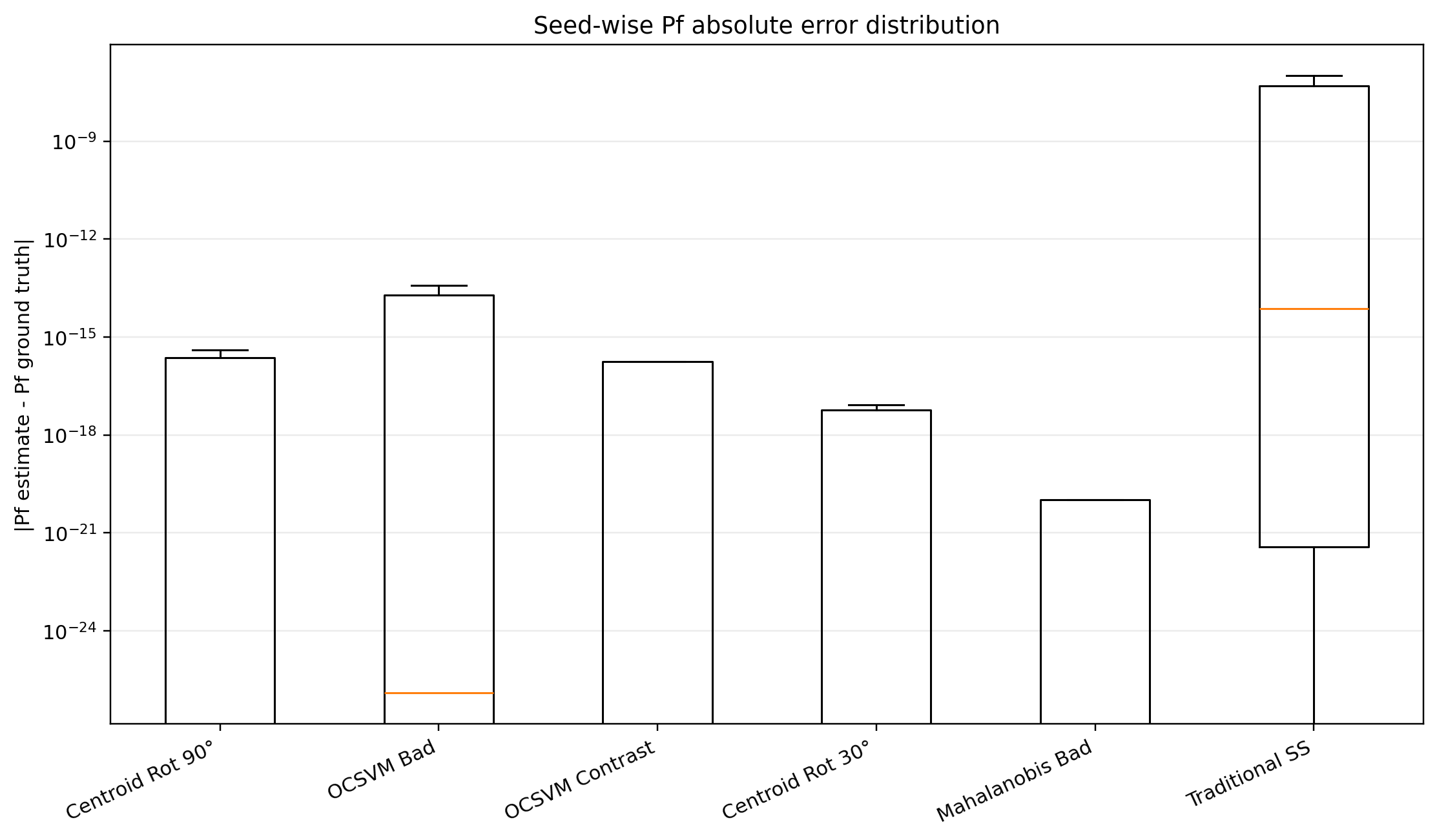}
  \caption{Seed-wise distribution of $|\hat p_f - p_f^{\mathrm{SMC}}|$ across
    $100$ MNIST seeds for the top \projname\ rulers and Traditional SS. The \projname\
    distributions are tightly concentrated near zero while Traditional SS
    shows a heavy upper tail driven by systematic false positives.}
  \label{fig:app-exp2-boxplot}
\end{figure}

\subsection*{Part~II: LLM Jailbreak Sweeps}
\label{app:plots-part2}

\begin{figure}[H]
  \centering
  \includegraphics[width=0.78\linewidth]{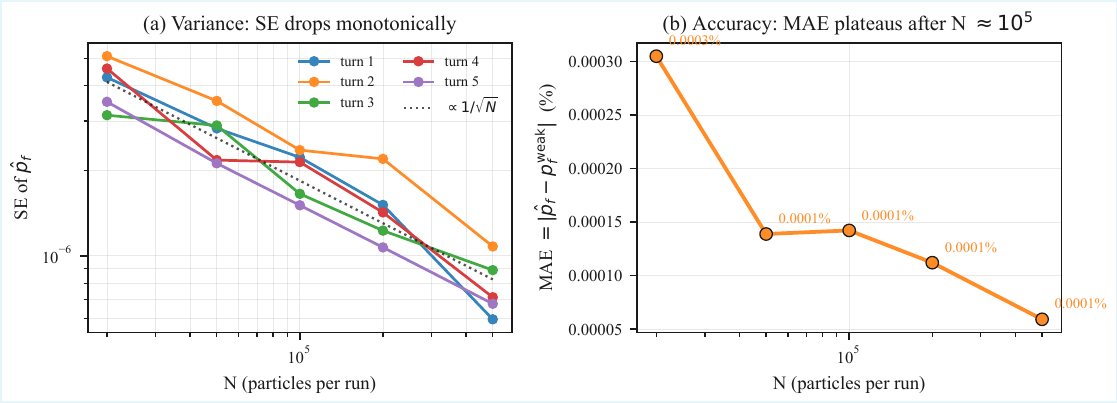}
  \caption{Population-size sweep for PC1/$p75$. Increasing $N$ reduces Monte
  Carlo variation as expected, while the mean error plateaus once the sample
  budget is large enough for stable cascade calibration.}
  \label{fig:app-llm-n-grid}
\end{figure}